\documentclass{article}

\usepackage{microtype}
\usepackage{graphicx}
\usepackage{subcaption}
\usepackage{booktabs} 

\usepackage{xcolor}
\definecolor{citecolor}{HTML}{0071bc}

\usepackage[
    pagebackref=false,
    breaklinks=true,
    colorlinks=true,       
    citecolor=citecolor,   
    linkcolor=googlered,   
    urlcolor=googleblue,    
    bookmarks=false
]{hyperref}

\usepackage[preprint]{icml2026}

\usepackage{amsmath}
\usepackage{amssymb}
\usepackage{mathtools}
\usepackage{amsthm}

\usepackage[capitalize,noabbrev]{cleveref}

\theoremstyle{plain}
\newtheorem{theorem}{Theorem}[section]
\newtheorem{proposition}[theorem]{Proposition}
\newtheorem{lemma}[theorem]{Lemma}
\newtheorem{corollary}[theorem]{Corollary}
\theoremstyle{definition}

\newtheorem{assumption}[theorem]{Assumption}
\theoremstyle{remark}

\usepackage[textsize=tiny]{todonotes}

\usepackage{enumitem}

\definecolor{greycolor}{RGB}{105,105,105}

\definecolor{vscodegreen}{RGB}{16,185,129} 
\definecolor{lightgreen}{RGB}{240, 251, 237} 
\definecolor{lightred}{RGB}{251, 240, 237} 

\definecolor{googleblue}{HTML}{4285F4}
\definecolor{googlered}{HTML}{EA4335}
\definecolor{googleyellow}{HTML}{FBBC05}
\definecolor{googlegreen}{HTML}{34A853}
\definecolor{googlepurple}{HTML}{A142F4}
\definecolor{annocolor}{RGB}{133,112,255}

\newcommand{\eg}[1]{\textit{e.g.,}}
\newcommand{\ie}[1]{\textit{i.e.,}}

\usepackage{xspace}
\usepackage{multirow}
\usepackage[table]{xcolor}

\icmltitlerunning{Escaping the Diversity Trap in Robotic Manipulation via Anchor-Centric Adaptation}

\begin{document}

\twocolumn[
  \icmltitle{Escaping the Diversity Trap in Robotic Manipulation via \\ Anchor-Centric Adaptation}



  \icmlsetsymbol{equal}{*}

  \begin{icmlauthorlist}
    \icmlauthor{Yanzhe Chen}{nus}
    \icmlauthor{Kevin Yuchen Ma}{nus,star}
    \icmlauthor{Qi Lv}{nus,hit}
    \icmlauthor{Yiqi Lin}{nus}
    \icmlauthor{Zechen Bai}{nus}
    \icmlauthor{Chen Gao}{nus}
    \icmlauthor{Mike Zheng Shou}{nus}
  \end{icmlauthorlist}

  \icmlaffiliation{nus}{National University of Singapore}
  \icmlaffiliation{hit}{Harbin Institute of Technology (Shenzhen)}
  \icmlaffiliation{star}{Institude for Information Research, A*STAR}

  \icmlcorrespondingauthor{Mike Zheng Shou}{mike.zheng.shou@gmail.com}


  \vskip 0.3in
]



\printAffiliationsAndNotice{}  

\begin{abstract}

While Vision-Language-Action (VLA) models offer broad general capabilities, deploying them on specific hardware requires real-world adaptation to bridge the embodiment gap. Since robot demonstrations are costly, this adaptation must often occur under a strict data budget. In this work, we identify a critical \textbf{diversity trap}: the standard heuristic of ``maximizing coverage" by collecting diverse, single-shot demonstrations can be self-defeating due to non-vanishing estimation noise. We formalize this phenomenon as a \textbf{Coverage--Density Trade-off}. By decomposing the policy error into estimation (density) and extrapolation (coverage) terms, we characterize an interior optimal allocation of unique conditions for a fixed budget. Guided by this analysis, we propose \textbf{Anchor-Centric Adaptation (ACA)}, a two-stage framework that first stabilizes a policy skeleton through repeated demonstrations at core anchors, then selectively expands coverage to high-risk boundaries via teacher-forced error mining and constrained residual updates. Real-robot experiments validate our trade-off framework and demonstrate that ACA significantly improves task reliability and success rates over standard diverse sampling strategies under the same budget.

\end{abstract}


\section{Introduction}

\begin{quote}
    \textit{I fear not the man who has practiced 10,000 kicks once, but I fear the man who has practiced one kick 10,000 times.} -- Bruce Lee
\end{quote}
Vision–Language–Action (VLA) models have demonstrated broad capabilities from pretraining~\cite{zitkovich2023rt, black2025pi_05, bu2025univla}.
However, deploying these models on specific physical platforms remains a challenge due to the embodiment mismatch and subtle environmental shifts that create substantial distribution gaps~\cite{kawaharazuka2025vision, zhang2025pure}.
Consequently, real‑robot post‑training is often a prerequisite for reliable deployment~\cite{xiang2025parallels, tan2025interactive}.
However, collecting real-world demonstrations is expensive, restricting adaptation to tight data budgets—typically spanning only tens to hundreds of trajectories~\cite{kim2024openvla, li2025mimicdreamer, chopra2025everydayvla}. This raises a pivotal question: \textbf{Under a limited budget, what is the most effective data collection strategy to learn a robust policy?}

\begin{figure}[t]
\centering
\includegraphics[width=\columnwidth]{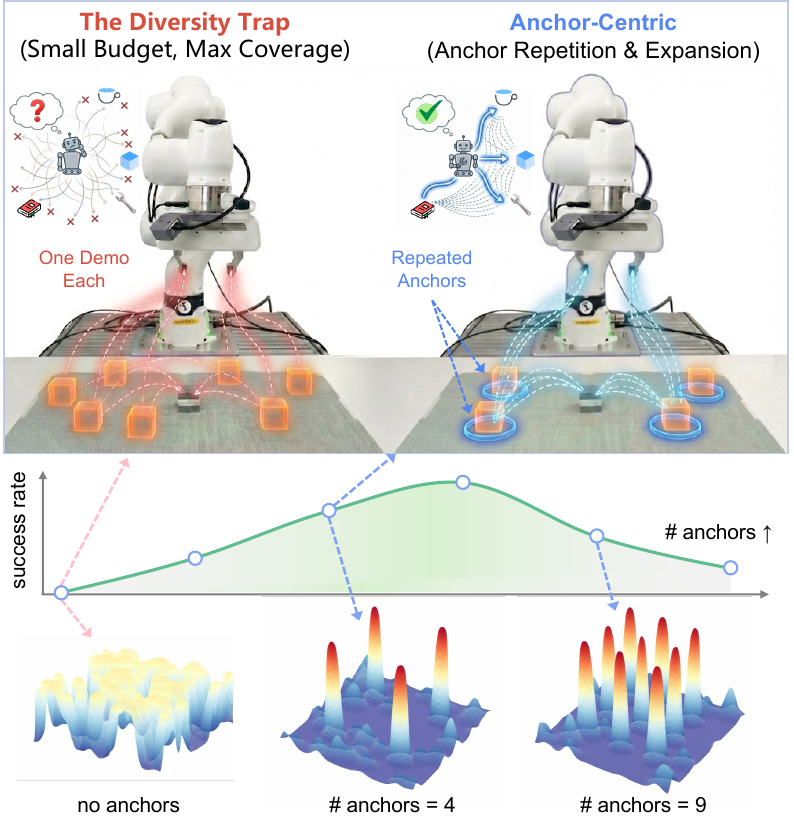} 
\caption{\textbf{Illustration of Motivation.} 
\textbf{Top:} Contrasting the ``diversity trap" of sparse, single-shot sampling against stable anchor-centric repetition. 
\textbf{Middle \& Bottom:} Inverted-U trend of success versus number of anchors, with 3D visualizations of sample distributions and densities at representative points.}
\label{fig_qiancai}
\end{figure}

A common approach prioritizes \textbf{coverage}—collecting demonstrations across the widest possible array of task conditions to maximize generalizations~\cite{li2025scalable, hou2025dita}. While this ``maximize coverage" heuristic aligns with standard data-scaling narratives~\cite{liu2023makes}, we identify it as a \textbf{diversity trap} in low-budget regimes. Spreading scarce samples too thin leaves each condition under-represented, thereby inflating estimation variance and destabilizing the resulting vector field. In these scenarios, the fundamental challenge shifts from \textbf{what to cover} to \textbf{how to allocate}.

Our core insight is that budget-efficient adaptation requires \textbf{deliberate repetition before expansion}. 
This principle echoes the concept of \textbf{anchoring}: establishing a sparse set of canonical conditions, or anchors, sampled repeatedly to consolidate a stable ``policy skeleton".
Empirically, we observe that task success follows an \textbf{inverted-U} dependence on the number of anchors. Beyond an optimal point, increasing diversity erodes per-condition density, leading to a catastrophic collapse in policy stability (Fig.~\ref{fig_qiancai}, bottom).

To formalize this phenomenon, we characterize real-robot post-training as a \textbf{Coverage--Density Trade-off}. Under a fixed budget, increasing the number of distinct conditions improves coverage but reduces local sample density, creating a tension between \textbf{extrapolation bias} (from unseen regimes) and \textbf{estimation noise} (from sparse observations). 
By analyzing conditional action vector fields~\cite{lipman2022flow}, we derive a worst-case error bound that explicitly decomposes into these terms. This derivation proves the existence of an \textbf{interior optimal allocation}, challenging the optimality of uniform diverse sampling under data scarcity.

Building on this framework, we propose Anchor-Centric Adaptation (ACA), a two-stage framework that navigates this trade-off. 
\textbf{Stage 1 (Anchor-Centric Stabilization)} prioritizes estimation stability by concentrating the budget on a minimal set of anchors, consolidating a low-variance base policy. \textbf{Boundary Mining via Teacher-Forced Deviation} then utilizes the base policy's field error as a proxy for extrapolation risk, efficiently identifying under-supported boundary regimes. Finally, \textbf{Stage 2 (Constrained Residual Adaptation)} integrates boundary-specific knowledge via a \textit{parameter-efficient residual pathway}.
This additive adaptation enables the model to rapidly incorporate critical data from identified boundaries while preventing performance degradation on the previously consolidated core. 
On real-robot platforms, ACA significantly improves adaptation robustness and task success, demonstrating the necessity of structured data allocation in the small-budget regime.

Our contributions are summarized as follows:
\begin{itemize}[leftmargin=*]
\item \textbf{Empirical Insight}: We identify the \textbf{diversity trap} in VLA post-training, demonstrating that maximizing coverage under tight budgets destabilizes learning, evidenced by a characteristic inverted-U performance trend.
\item \textbf{Theoretical Formalization}: We introduce the \textbf{Coverage--Density Trade-off} framework, providing an analytical bound that reveals the optimal sampling density required to balance estimation and extrapolation bias.
\item \textbf{Practical Framework}: We propose \textbf{ACA}, a two-stage approach employing error-driven boundary acquisition and constrained residual adaptation, improving task success in data-scarce real-robot experiments.
\end{itemize}

\section{Related Work}

\subsection{Vision-Language-Action (VLA) Models}

VLA models unify perception, instruction following, and control by coupling large-scale vision–language pretraining with action prediction, yielding strong semantic generalization and zero-/few-shot transfer in robotics settings~\cite{zitkovich2023rt, bjorck2025gr00t, li2024cogact,  black2024pi_0, kawaharazuka2025vision, mu2023embodiedgpt, ahn2022can, chen2025villa}.
A prominent line treats actions as tokens within an autoregressive multimodal model, enabling web-scale knowledge to shape robot behavior~\cite{kim2024openvla, cen2025worldvla, pertsch2025fast, zhang2026vlm4vla}.
More recent VLAs move toward continuous-time / high-rate control by generating action trajectories with diffusion- or flow-based policies, improving smoothness and temporal fidelity while retaining the benefits of large-scale pretraining~\cite{black2025pi_05, zhang2025dreamvla, cen2025rynnvla, jiang2025asyncvla, shukor2025smolvla, zhong2025flowvla}.
Large cross-embodiment datasets and open-source generalist policies further accelerate VLA scaling, evaluation, and downstream adaptation~\cite{o2024open, zhang2025safevla, li2024evaluating, team2025gemini, fei2025libero}.

\subsection{Real-World VLA Adaptation}

Adapting pretrained VLAs to a new scene typically requires in-domain trajectories to bridge embodiment mismatch and subtle distribution shift, motivating data-efficient post-training pipelines~\cite{kim2025fine, team2024octo, lin2025onetwovla, lee2025molmoact, kachaev2025don, li2025vla, zhao2025mos, mark2024policy}.
A standard heuristic is to maximize condition diversity, but small budgets amplify optimization variance and can render fine‑tuning brittle when demonstrations are sparse per condition~\cite{sridhar2025ricl, yang2025beyond}.
Recent adaptation methods explore parameter-efficient updates, modularity, and constrained refinement to improve robustness~\cite{zhang2025dig, ni2025swiftvla, wen2025tinyvla, zhang2025robustvla}.
Parallel work views adaptation through the lens of continual learning, seeking to prevent catastrophic forgetting while acquiring new skills~\cite{romer2026clare, garcia2025towards, zheng2025leveraging, hafez2023continual}. However, most of these approaches are agnostic to the \emph{acquisition strategy} of the new data itself. Active or error-driven data collection has also been investigated to prioritize demonstrations that provide the highest marginal value for downstream success under limited budgets~\cite{zhu2025wmpo, yu2025correctnav, xue2025resample, wu2025continually, gao2025vla, wang2023robot, zhu2022bottom, wang2024sparse}.
We introduce a \textit{Coverage–Density trade‑off} that explains when ``maximize coverage'' fails.
We bridge this gap with ACA to ensure policy skeleton stability before expanding coverage to high-risk regimes.



\section{Coverage--Density Trade-off}
\label{sec_theory}

This section formalizes the \textbf{Coverage--Density Trade-off} inherent in real-robot adaptation under tight data budgets. We analyze the adaptation process as learning a time-dependent conditional vector field via flow-matching~\cite{lipman2022flow}. Our analysis yields: (\romannumeral1) an error bound decomposing into \emph{density (estimation)} and \emph{coverage (extrapolation)} terms; (\romannumeral2) an interior optimal allocation $K^\star$ that identifies the suboptimality of fully diverse sampling; and (\romannumeral3) a theoretical foundation for the ACA pipeline.
Below we outline the key steps; the complete analysis is provided in \S~\ref{app_theory}.

\subsection{Problem Formulation}
\label{subsec:formulation}

Let $p \in \mathcal P \subset \mathbb R^d$ parameterize geometric variations (e.g., object pose) and $t \in [0, 1]$ denote flow time. The training target is a conditional vector field $f^\star(z, p, t)$, where $z \in \mathcal{Z}$ is the action-state. 
We consider post-training under a \textit{fixed and small} total budget of $N$ samples~/~trajectories collected at conditions $\{p\}$. A sampling strategy specifies (i) a set of $K$ distinct conditions (\textbf{\textit{coverage}}) and (ii) the number of repeats per condition (\textbf{\textit{density}}) $\{n_i\}_{i=1}^K$ with $\sum_{i=1}^K n_i = N$.
Abstracting the flow-matching objective, we consider the regression target:
\begin{equation}
\label{eq:regression_target}
y = f^\star(z, p, t) + \varepsilon, \quad \varepsilon \sim \text{sub-Gaussian}(\sigma^2),
\end{equation}
where $\varepsilon$ represents effective noise due to stochasticity and unmodeled effects in real-robot adaptation.

\subsection{Assumptions}
\label{subsec:assumptions}

To make the allocation trade-off explicit, we analyze an idealized \textbf{nearest-anchor surrogate}. While deep neural policies are globally parameterized, under small budgets the learning signal is primarily supported by the vicinities of observed conditions, motivating a locality-based analysis.

\begin{description}[leftmargin=*]
\item[A1 (Smoothness in Condition).] For all $(z, t)$, $f^\star(z, \cdot, t)$ is $L$-Lipschitz on $\mathcal P$:
\begin{equation}
\|f^\star(z, p, t) - f^\star(z, p', t)\| \le L \|p - p'\|.
\end{equation}

\item[A2 (Effective Local Estimation).] For fixed $(z,t)$, the effective estimation error at an observed condition $p_i$ decreases with its repeat count $n_i$:
\begin{equation}
\mathbb E\|\hat f(z, p_i, t) - f^\star(z, p_i, t)\| \le \frac{C\sigma}{\sqrt{n_i}},
\end{equation}
where $C>0$ is a universal constant. This abstracts the empirically observed variance reduction from repeated supervision; global parameter sharing may affect constants but preserves the $1/\sqrt{n_i}$ dependence.

\item[A3 (Surrogate Generalization).] For analysis we consider a local predictor that queries the nearest observed condition:
\begin{equation}
\hat f(z, p, t) := \hat f(z, p_{i(p)}, t), \, i(p) = \arg\min_i \|p - p_i\|.
\end{equation}
We emphasize that A3 is an \emph{idealized locality surrogate} used to make the coverage term explicit and to expose worst-case dependence on the fill distance $h$. Deep policies may interpolate more smoothly than this surrogate; our claims focus on the \emph{mechanism} of the coverage--density trade-off and the resulting interior optimum, rather than a tight characterization of neural-network generalization.
\end{description}

\subsection{The Coverage--Density Decomposition}

Define the \textit{fill distance}
\begin{equation}
    h := \sup_{p \in \mathcal{P}} \min_i \| p - p_i \|,
\end{equation}
representing the worst-case coverage gap in $\mathcal{P}$.

\begin{proposition}[Coverage--Density Bound]
\label{prop:decomp}
Under A1--A3, the worst-case expected field error is bounded by:
\begin{equation}
\label{eq:main_bound}
\sup_{p, z, t} \mathbb{E} \| \hat{f} - f^\star \| \le \underbrace{\max_{i} \mathbb{E} \| \hat{f}_i - f^\star_i \|}_{\text{Estimation (Density) Error}} + \underbrace{L h}_{\text{Extrapolation Bias}},
\end{equation}
where $\hat{f}_i := \hat{f}(z, p_i, t)$ and $f^\star_i := f^\star(z, p_i, t)$.
\end{proposition}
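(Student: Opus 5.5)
The proof is a triangle inequality under the nearest-anchor surrogate of A3. Fix an arbitrary query $(z,p,t)$ and let $i = i(p)$ be the index of the nearest observed condition. By A3 we have $\hat f(z,p,t) = \hat f(z,p_i,t) = \hat f_i$. Inserting and subtracting $f^\star_i$ gives the pointwise split
\begin{equation*}
\|\hat f(z,p,t) - f^\star(z,p,t)\| \le \|\hat f_i - f^\star_i\| + \|f^\star(z,p_i,t) - f^\star(z,p,t)\|.
\end{equation*}
The first term is the estimation (density) error at the anchor. The second is the extrapolation error from moving off the anchor.

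\textbf{Bounding the extrapolation term.} This term is deterministic. By A1 it is at most $L\|p - p_i\|$. Since $i$ minimizes the distance to $p$, we have $\|p-p_i\| = \min_j \|p-p_j\| \le h$ by the definition of the fill distance. So the second term is at most $Lh$, uniformly in $(z,p,t)$.

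\textbf{Bounding the estimation term.} Take expectations over the noise in the training data; the anchor locations and the query are treated as fixed, so $i(p)$ is nonrandom. Linearity gives
\begin{equation*}
\mathbb E\|\hat f - f^\star\| \le \mathbb E\|\hat f_{i(p)} - f^\star_{i(p)}\| + Lh \le \max_j \mathbb E\|\hat f_j - f^\star_j\| + Lh,
\end{equation*}
where the maximum runs over all anchor indices $j$ at the same $(z,t)$.

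\textbf{Taking the supremum.} The right-hand side no longer depends on $p$. Taking $\sup_p$ gives \eqref{eq:main_bound} for each $(z,t)$, with $\hat f_j, f^\star_j$ evaluated at that $(z,t)$. Taking the supremum over $(z,t)$ as well yields the stated worst-case bound, provided the estimation term is read as the supremum over $(z,t)$ of $\max_j \mathbb E\|\hat f_j - f^\star_j\|$. Substituting A2 then gives the explicit form $C\sigma/\sqrt{\min_i n_i} + Lh$, which is used later for the allocation analysis.

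\textbf{Main obstacle.} The analysis itself is routine; the only delicate point is measure-theoretic bookkeeping. The nearest-anchor map $i(p)$ may be set-valued on Voronoi boundaries; I would fix any measurable tie-breaking rule, which does not affect the bound. The anchors must be treated as fixed (non-random) so that $i(p)$ commutes with the expectation. The supremum over $(z,t)$ must be taken after the expectation, which is consistent with how the statement is written.
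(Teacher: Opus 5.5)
Your proposal is correct and follows the paper's proof essentially step for step. You substitute the nearest-anchor predictor via A3, split with the triangle inequality, bound the off-anchor term by $L\|p-p_{i(p)}\|\le Lh$ using A1 and the fill distance, and then take expectations and the supremum over $p$. Your reading of the estimation term at fixed $(z,t)$, or supremized over $(z,t)$, also matches the appendix version, which states the bound pointwise in $(z,t)$ or averaged over an evaluation distribution $\nu$.
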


\begin{proof}[Proof sketch]
For any \(p\), let \(p_{i(p)}\) be its nearest anchor. By A3, \(\hat{f}(z, p) = \hat{f}(z, p_{i(p)})\). Let $\Delta(z, p) = \| \hat{f}(z, p) - f^\star(z, p) \|$ denote the policy error at point $p$. By the triangle inequality and Assumption A3, the error can be bounded as:
\begin{equation}
\label{eq:error_bound}
\Delta(z, p) \le \Delta(z, p_{i(p)}) + \| f^\star(z, p_{i(p)}) - f^\star(z, p) \|.
\end{equation}
The second term is bounded by $L\|p-p_{i(p)}\|\le Lh$ via A1.
Taking expectation and supremum over $p$ yields Proposition~\ref{prop:decomp}.
\end{proof}

\subsection{Scaling Law \& The Diversity Trap}
\label{subsec:interior_opt}

To obtain a simple scaling law, we analyze quasi-uniform anchor placement and uniform repeats.
\begin{lemma}[Uniform allocation with quasi-uniform anchors]
\label{lem:uniform}
Assume (\romannumeral1) uniform repeats $n_i = N/K$ for all anchors, and (\romannumeral2) quasi-uniform anchor placement on $\mathcal{P}$ such that $h \le c K^{-1/d}$ for some constant $c>0$. Then
\begin{equation}
\label{eq:tradeoff}
\mathcal{E}(K) := \sup_{p\in\mathcal P}\mathbb{E}\|\hat f-f^\star\|
\;\le\; C\sigma\sqrt{\frac{K}{N}} + L c K^{-1/d}.
\end{equation}
\end{lemma}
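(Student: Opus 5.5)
The plan is to plug the two assumptions of the lemma into the two terms of Proposition~\ref{prop:decomp}. For each fixed $(z,t)$ and each $p\in\mathcal P$, Proposition~\ref{prop:decomp} gives
\[
\mathbb E\|\hat f(z,p,t)-f^\star(z,p,t)\| \le \max_i \mathbb E\|\hat f_i - f^\star_i\| + L h .
\]
I will then bound the estimation term with A2 and the extrapolation term with the quasi-uniform placement hypothesis (\romannumeral2). Finally I will take the supremum over $p$, and over $(z,t)$ as well if desired, which is harmless because neither bound depends on these variables.

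\emph{Estimation term.} By A2, for each anchor $i$ we have $\mathbb E\|\hat f_i - f^\star_i\| \le C\sigma/\sqrt{n_i}$. Under hypothesis (\romannumeral1), $n_i = N/K$ for all $i$, so every anchor gives the same bound
\[
\frac{C\sigma}{\sqrt{N/K}} = C\sigma\sqrt{K/N}.
\]
The maximum over $i$ is therefore this common value. The one point to record here is that A2's constant $C$ is uniform over anchors and over $(z,t)$, so taking the max over $i$ and the sup over $(z,t)$ costs nothing. If $N/K$ is not an integer, I will read (\romannumeral1) as an idealization, or equivalently assume $K \mid N$; otherwise $n_i \ge \lfloor N/K\rfloor$ would change only the constants.

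\emph{Coverage term.} By hypothesis (\romannumeral2), $h \le cK^{-1/d}$, so $Lh \le LcK^{-1/d}$. Adding the two bounds gives Eq.~\eqref{eq:tradeoff}.

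There is no real obstacle in this argument. The only step needing care is the order of expectation and supremum. $\mathcal E(K)$ is defined as $\sup_p \mathbb E\|\cdot\|$, not $\mathbb E\sup_p$, and this is exactly the quantity controlled pointwise by Proposition~\ref{prop:decomp}, since the extrapolation term $Lh$ is deterministic. So the pointwise bound passes to the supremum directly, with no union bound or chaining needed.
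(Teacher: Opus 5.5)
Your proposal is correct and matches the paper's proof: it applies A2 with $n_i = N/K$ to get the estimation term $C\sigma\sqrt{K/N}$, bounds the extrapolation term by $LcK^{-1/d}$ using the quasi-uniform hypothesis, and plugs both into Proposition~\ref{prop:decomp}. Your added care is a welcome tightening of the paper's sketch but does not change the argument: you work pointwise in $(z,t)$, use that the A2 bound is uniform there, and note that $\mathcal E(K)$ is $\sup_p \mathbb E$ rather than $\mathbb E \sup_p$.
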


\begin{proof}[Proof sketch]
By A2 and $n_i=N/K$, $\max_i \mathbb{E}\|\hat f(z,p_i,t)-f^\star(z,p_i,t)\| \le C\sigma\sqrt{K/N}$. By assumption, $h\le cK^{-1/d}$. Plug into Proposition~\ref{prop:decomp}.
\end{proof}

\begin{corollary}[Interior Optimal Allocation]
\label{cor:optimal}
Under non-negligible noise $\sigma > 0$, the error $\mathcal{E}(K)$ is minimized at an interior optimal $K^\star$:
\begin{equation}
\label{eq:Kstar}
K^{\star} = \left( \frac{2 L c N^{1/2}}{d C \sigma} \right)^{\frac{2d}{d+2}} \propto \left( \frac{L^2 N}{\sigma^2} \right)^{\frac{d}{d+2}} < N,
\end{equation}
for sufficiently large $N$. At this choice,
\begin{equation}
\label{eq:Ekstar}
\mathcal E(K^\star) \;=\; \tilde O\!\left(\sigma^{\frac{2}{d+2}} L^{\frac{d}{d+2}} N^{-\frac{1}{d+2}}\right),
\end{equation}
up to constants $(C,c)$.
\end{corollary}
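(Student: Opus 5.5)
The plan is to read the statement as a claim about the right-hand side of Lemma~\ref{lem:uniform}, i.e.\ the surrogate bound
\begin{equation*}
g(K) := C\sigma\sqrt{K/N} + LcK^{-1/d},
\end{equation*}
and to minimize it over $K$ treated as a continuous variable on $(0,\infty)$. Rounding to an integer and requiring $1\le K\le N$ are handled at the end. Since $\mathcal E(K)\le g(K)$ by Lemma~\ref{lem:uniform}, "minimized at $K^\star$" will mean that $K^\star$ minimizes this bound. I will state that interpretation explicitly.

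First I compute
\begin{equation*}
g'(K) = \frac{C\sigma}{2\sqrt{NK}} - \frac{Lc}{d}K^{-1/d-1}
= K^{-1/d-1}\Big(\frac{C\sigma}{2\sqrt N}K^{\frac12+\frac1d} - \frac{Lc}{d}\Big).
\end{equation*}
The first term of $g$ is concave in $K$, so I will not argue by convexity. Instead I use the factorization above. The bracket is strictly increasing in $K$, negative near $0$ and positive for large $K$. Hence $g$ decreases and then increases, and it has a unique stationary point, which is its global minimizer. This is where $\sigma>0$ is needed: if $\sigma=0$ the bracket is always negative, and $g$ decreases monotonically toward full diversity.

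Solving for the zero of the bracket gives
\begin{equation*}
K^{\frac{d+2}{2d}} = \frac{2Lc\sqrt N}{dC\sigma},
\end{equation*}
which is exactly \eqref{eq:Kstar}. The proportionality $K^\star\propto (L^2N/\sigma^2)^{d/(d+2)}$ follows by pulling out constants. Since $d/(d+2)<1$, we get $K^\star = \Theta(N^{d/(d+2)}) = o(N)$, so $K^\star<N$ once $N$ is large enough. For large $N$ we also have $K^\star\ge 1$. The minimizer is therefore interior and not the fully diverse endpoint $K=N$. Rounding $K^\star$ to the nearest admissible integer changes $g$ by at most a constant factor, because both terms are power laws. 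This rounding is the source of the $\tilde O$.

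For the value \eqref{eq:Ekstar}, the stationarity condition rearranges to
\begin{equation*}
C\sigma\sqrt{K^\star/N} = \tfrac{2}{d}\,Lc\,(K^\star)^{-1/d},
\end{equation*}
so $g(K^\star) = (1+2/d)\,Lc\,(K^\star)^{-1/d}$. Substituting
\begin{equation*}
(K^\star)^{-1/d} = \Big(\frac{dC\sigma}{2Lc\sqrt N}\Big)^{\frac{2}{d+2}}
\end{equation*}
gives the $L$-exponent $1-\tfrac{2}{d+2}=\tfrac{d}{d+2}$, together with $\sigma^{2/(d+2)}$ and $N^{-1/(d+2)}$. The constants depending on $C$, $c$ and $d$ are absorbed.

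The main obstacle is conceptual rather than computational. Lemma~\ref{lem:uniform} only gives an upper bound on $\mathcal E$, so the minimization is rigorous only for that bound. The quasi-uniform condition $h\le cK^{-1/d}$ also implicitly requires $K$ to be an integer with an achievable anchor configuration. I would address both points with the remark and the rounding argument above, rather than claiming that the true $\mathcal E$ is minimized at $K^\star$.
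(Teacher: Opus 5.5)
Your proposal is correct and takes the same route as the paper. The appendix likewise minimizes the right-hand side of the uniform-allocation bound over a continuous $K$, ignoring integrality, to obtain $K^\star \asymp (L^2c^2N/\sigma^2)^{d/(d+2)}$ and the $N^{-1/(d+2)}$ rate. Your sign-of-derivative unimodality argument and your rounding remark make explicit what the paper leaves implicit. One small correction: rounding costs only a constant factor, so the $\tilde O$ should be read as absorbing the constants $C,c,d$ (and the logarithmic factors of the high-probability variant), not as coming from rounding.
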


\textbf{Fully diverse sampling.}
The ``maximize coverage'' heuristic corresponds to ($K=N$, $n_i=1$), giving
\begin{equation}
\label{eq:diverse}
\mathcal E(N) \;\le\; C\sigma + Lc N^{-1/d},
\end{equation}
whose leading estimation noise \textbf{$C\sigma$ remains constant}.
Thus, under non-negligible noise and tight budgets, pushing $K$ toward $N$ is asymptotically suboptimal; repetition is required to reduce estimation error.

\subsection{Trajectory Propagation \& Two-Stage Motivation}
\label{subsec:gronwall}

To connect vector-field accuracy to task behavior, let $z(t)$ and $\hat{z}(t)$ be trajectories induced by $f^\star(\cdot,p,\cdot)$ and $\hat{f}(\cdot,p,\cdot)$ from the same initial condition. Assuming both fields are $\Lambda$-Lipschitz in $z$ over a bounded domain $\mathcal Z$, Gr\"onwall's inequality~\cite{evans2022partial} implies:
\begin{equation}
\label{eq:gronwall}
\|\hat z(T) - z(T)\| \le \frac{e^{\Lambda T}-1}{\Lambda} \cdot \delta(p),
\end{equation}
where $\delta(p) = \sup_{z, t} \|\hat f(z,p,t) - f^\star(z,p,t)\|$. Eq.~\eqref{eq:gronwall} motivates using field-approximation error as a proxy for open-loop deviation: errors can be amplified along the trajectory as $T$ increases.
Since Proposition~\ref{prop:decomp} is stated in expectation and pointwise in $(z,t)$, in the main text we use Lemma~\ref{lem:uniform} as a conservative proxy to reason about regimes where extrapolation dominates (large $h$), which predicts larger open-loop deviation and lower success. A uniform high-probability version over bounded $(z,t)$ domains is provided in the supplement.

Proposition~\ref{prop:decomp} directly suggests a two-stage allocation under a fixed budget.
Repeating a small set of anchor conditions reduces the estimation (density) term, while adding a small number of additional conditions reduces the extrapolation (coverage) term in regions far from anchors.
Let $\mathcal P_{\mathrm{bd}}\subset\mathcal P$ denote such boundary regimes and
\begin{equation}
h_{\mathrm{bd}}:=\sup_{p\in\mathcal P_{\mathrm{bd}}}\min_i\|p-p_i\|.
\end{equation}
When $h_{\mathrm{bd}}$ is large, boundary error is dominated by $Lh_{\mathrm{bd}}$; thus it is most efficient to (i) stabilize learning via repeated anchors, then (ii) selectively expand to boundary conditions identified by a deviation-correlated signal (Eq.~\ref{eq:gronwall}) and update the policy with a parameter-efficient residual correction to limit unnecessary drift on anchors.

\section{Anchor-Centric Adaptation (ACA)}
\label{sec_method}

\begin{figure*}[htb]
\centering
\includegraphics[width=\textwidth]{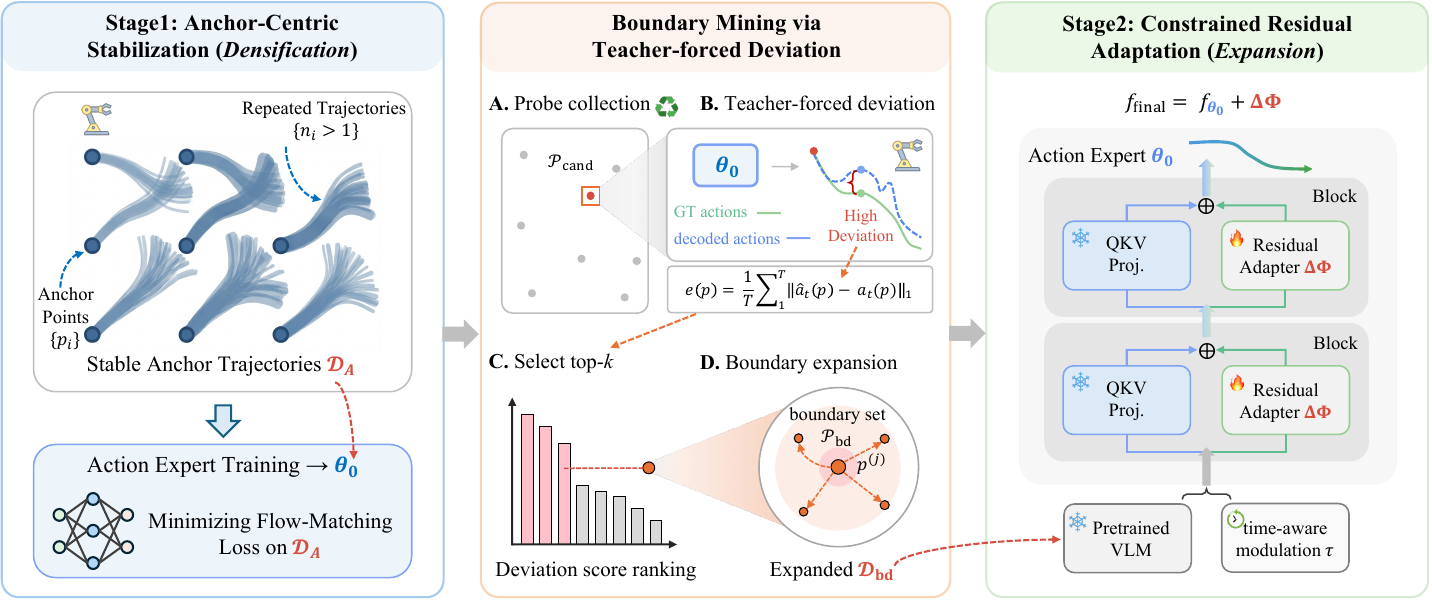}
\caption{\textbf{Overview of Anchor-Centric Adaptation (ACA).}
\textbf{Stage 1} learns a stable core policy by repeating demonstrations at sparse \emph{anchors};
\textbf{Boundary mining} screens probe trajectories and selects high-deviation locations;
\textbf{Stage 2} expands boundary competence via a constrained, parameter-efficient residual update in the \emph{Action Expert}, while keeping the pretrained \emph{VLM} frozen.}
\label{fig_approach}
\end{figure*}

We introduce \textbf{ACA}, a two-stage framework that operationalizes the coverage--density trade-off for budget-efficient VLA adaptation (Fig.~\ref{fig_approach}). ACA sequences the adaptation process into: (i) anchor densification for core stabilization, and (ii) error-guided expansion for boundary competence.

\subsection{Architectural Preliminaries}
\label{subsec:prelim}

ACA builds upon a flow-based VLA architecture~\cite{black2024pi_0, black2025pi_05}, which decouples perception and action through a pretrained \textit{VLM} and a task-specific \textit{Action Expert}. The policy is trained by flow matching to learn a time-dependent action vector field conditioned on the task condition $p$ and flow time $\tau \in [0,1]$.

\textbf{Theoretical Mapping.}
Our analysis in Sec.~\ref{sec_theory} studies field approximation \emph{pointwise} at fixed action-state and flow time, which makes the coverage--density mechanism explicit. ACA instantiates the resulting core-to-boundary allocation in a practical pipeline.

\textbf{Budget Allocation.} Given a strict total budget of $N$ real-robot trajectories, ACA partitions the interaction into three functional stages:
\begin{equation}
\label{eq:budget_split}
N \;=\; N_{\mathrm{A}} \;+\; N_{\text{probe}} \;+\; N_{\text{bd}},
\end{equation}
where $N_{\mathrm{A}}$ trajectories are dedicated to core anchoring, $N_{\text{probe}}$ trajectories serve as a screening pool for boundary identification, and $N_{\text{bd}}$ trajectories provide the expansion.

\subsection{Stage 1: Anchor-Centric Stabilization}
\label{subsec:stage1}

Stage~1 seeks to reduce \textit{estimation variance} by concentrating the majority of the interaction budget on a sparse set of core anchors $\{p_i\}_{i=1}^K$.

\textbf{Anchoring Strategy.}
Anchors are selected to form a coarse, quasi-uniform cover of the reachable workspace. We collect repeated demonstrations at each anchor ($n_i>1$), and train a stable core policy $\theta_0$.

\textbf{Training.}
We train the Action Expert on the anchor dataset $\mathcal{D}_{\mathrm{A}}$ using the standard flow-matching objective:
\begin{equation}
\label{eq:stage1}
\theta_0 \;=\; \arg\min_{\theta}\; \mathbb{E}_{(a,p)\sim \mathcal{D}_{\mathrm{A}}}\!\left[\mathcal{L}_{\text{FM}}(\theta; a,p)\right],
\end{equation}
while keeping the \textit{VLM} frozen to leverage pretrained multi-modal representations. This stage yields a policy that is reliable in well-supported regions but may still incur extrapolation error in under-sampled boundary regimes.

\subsection{Boundary Mining via Teacher-Forced Deviation}
\label{subsec:cluster}

Boundary mining identifies under-supported regimes where the Stage-1 policy is likely to extrapolate poorly.

\textbf{Deviation-Based Screening.}
We first collect a set of \emph{probe} trajectories by teleoperation at conditions sampled across the workspace, forming a candidate set $\mathcal{P}_{\text{cand}}$ (one probe trajectory per candidate condition). For each candidate condition $p\in\mathcal{P}_{\text{cand}}$, we compute a \textbf{teacher-forced deviation score} by decoding actions on the \emph{demonstration} observation sequence and comparing to the demonstrated actions:
\begin{equation}
\label{eq:deviation_score}
e(p) \;:=\; \frac{1}{T}\sum_{t=1}^{T}\left\|\hat a_t(p) - a_t(p)\right\|_{1}.
\end{equation}
Teacher forcing avoids compounding state drift, making $e(p)$ a clean proxy for action prediction error at condition $p$.

\textbf{Selecting high-deviation locations.}
We select boundary locations using a top-$k$ rule for robustness and simplicity: we take the $k$ candidates with the largest deviation scores.
Denote the selected set as $\mathcal{P}_{\text{bd}}=\{p^{(j)}\}_{j=1}^{k}$.
Crucially, the corresponding $k$ probe trajectories are \emph{reused} as boundary supervision in Stage~2.

\textbf{Local boundary expansion.}
Around each selected boundary location $p^{(j)}$, we sample nearby conditions (e.g., random perturbations within a small neighborhood) and collect additional trajectories, forming an expanded boundary dataset. This targeted expansion selectively reduces the coverage gap in regimes most detrimental to task success.

\subsection{Stage 2: Constrained Residual Adaptation}
\label{subsec:stage2}
\textbf{Boundary training set.}
Stage~2 trains on a boundary dataset $\mathcal{D}_{\text{bd}}$ that combines (i) the reused high-deviation probe trajectories from boundary mining and (ii) the newly collected local-expansion trajectories around $\mathcal{P}_{\text{bd}}$.

\textbf{Residual Architecture.}
Stage~2 improves boundary behavior \emph{without} drifting the anchored core from Stage~1.
We therefore freeze the Stage-1 policy and introduce a small trainable \emph{residual} branch inside the \textit{Action Expert}.
For each training instance with condition $p$, observation/context, and flow time $\tau$, the model predicts a time-dependent action vector field at a flow-state $z$.
The frozen base policy produces $f_{\theta_0}(z,p,\tau)$, while the residual branch outputs a same-dimensional correction $\Delta_\phi(z,p,\tau)$, yielding
\begin{equation}
\label{eq:additive}
f_{\text{final}}(z,p,\tau) \;=\; f_{\theta_0}(z,p,\tau) \;+\; \Delta_\phi(z,p,\tau),
\end{equation}
where $\theta_0$ is fixed and only $\phi$ is optimized.
We parameterize $\Delta_\phi$ with low-rank adapters (LoRA)~\cite{hu2022lora} inserted into selected linear layers of the Action Expert, constraining updates to a low-capacity subspace.
Finally, we modulate the residual strength using the flow-time embedding so the correction can specialize across different stages of the denoising process.

\textbf{Optimization.}
We optimize only $\phi$ on the boundary dataset $\mathcal{D}_{\text{bd}}$ using the same flow-matching objective as Stage~1, while keeping $\theta_0$ frozen. The residual branch is zero-initialized, so optimization starts exactly from the Stage-1 policy.
With a fixed base predictor and a low-rank, time-modulated correction, Stage~2 primarily adds localized capacity for boundary regimes while substantially reducing the risk of global drift typical of full-parameter fine-tuning.

\section{Experiment}

We evaluate the ACA framework by addressing three objectives: \textbf{\textit{(\romannumeral1)}} the effectiveness of ACA in mitigating the \textbf{diversity trap} and its generalization across VLA backbones; \textbf{\textit{(\romannumeral2)}} the sensitivity of adaptation to \textbf{anchor properties}, specifically the quantity, spatial layout, and consolidation budget $N_A$; and \textbf{\textit{(\romannumeral3)}} the individual and joint contributions of \textbf{boundary mining} and \textbf{residual adaptation} to overall performance.

\setlength{\tabcolsep}{5pt}
\begin{table*}[htb]
\caption{\textbf{Region‑level success rates under varying data budgets $N$}. Reported as: \textit{successful trajectories (out of 20 trials) and mean success rate (\%).} S@$k$ denotes success within progressively larger central regions (harder as $k$ increases).
}
\label{tbl_main}
\centering
\small
\begin{tabular}{lccccccccccccl}
\toprule
\multirow{2}{*}{Method} & \multicolumn{3}{c}{Block Stacking}& \multicolumn{3}{c}{Cup Placement} & \multicolumn{3}{c}{Table Cleaning} & \multicolumn{3}{c}{Toy Tiding} & \multirow{2}{*}{Mean (\%)} \\
\cmidrule(r){2-4}
\cmidrule(r){5-7}
\cmidrule(r){8-10}
\cmidrule(r){11-13}
           & S@1 & S@2 & S@3 & S@1 & S@2 & S@3 & S@1 & S@2 & S@3 & S@1 & S@2 & S@3 &  \\
\midrule
\rowcolor{gray!8}
\multicolumn{14}{c}{{\textcolor[RGB]{105, 105, 105}{\textit{N = 50}}}}  \\
$\pi_{0.5}$  & 5 & 2 & 0 & 6 & 4 & 0 & 2 & 0 & 0 & 8 & 4 & 2 & 13.8 \\
$\pi_{0.5}$ + \textbf{ACA} & 10 & 12 & 9 & 14 & 10 & 10 & 7 & 6 & 5 & 12 & 10 & 6 & 46.3~\textsubscript{\textcolor{vscodegreen}{\textbf{+32.5}}} \\
\midrule
\rowcolor{gray!8}
\multicolumn{14}{c}{{\textcolor[RGB]{105, 105, 105}{\textit{N = 100}}}}  \\
$\pi_{0.5}$  & 8 & 2 & 0 & 14 & 6 & 3 & 10 & 2 & 1 & 14 & 10 & 6 & 31.7 \\
$\pi_{0.5}$ + \textbf{ACA} & 16 & 14 & 13 & 18 & 17 & 14 & 12 & 11 & 8 & 18 & 16 & 17 & 72.5~\textsubscript{\textcolor{vscodegreen}{\textbf{+40.8}}} \\
\midrule
\rowcolor{gray!8}
\multicolumn{14}{c}{{\textcolor[RGB]{105, 105, 105}{\textit{N = 150}}}}  \\
$\pi_{0.5}$  & 14 & 8 & 4 & 18 & 13 & 8 & 9 & 7 & 4 & 18 & 14 & 10 & 52.9 \\
$\pi_{0.5}$ + \textbf{ACA} & 17 & 18 & 16 & 20 & 20 & 18 & 14 & 11 & 9 & 20 &  19 & 19 & 83.8~\textsubscript{\textcolor{vscodegreen}{\textbf{+30.9}}} \\
\bottomrule
\end{tabular}%
\end{table*}

\subsection{Experimental Setup}

We evaluate ACA on real hardware to capture the \textbf{non-vanishing estimation noise} ($\sigma > 0$) central to our framework (shown in Sec.~\ref{sec_theory}). Unlike simulation environments that often rely on deterministic transitions or simplified noise models \cite{peng2018sim, chebotar2019closing}, real-world interactions exhibit stochastic variability critical for characterizing the ``diversity trap". This setting ensures our results reflect the true estimation--extrapolation tension inherent to physical robot adaptation.

\begin{figure}[t]
    \centering
    \includegraphics[width=\columnwidth]{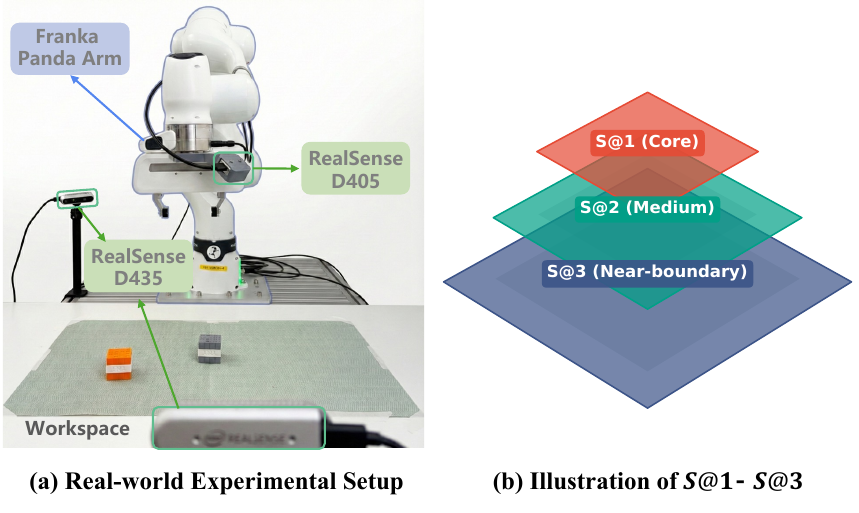} 
    \caption{Illustration of \textbf{(a)} the real-robot experimental setup and \textbf{(b)} the spatial definition of S@1–S@3 metrics.}
    \label{fig_setup}
\end{figure}

\textbf{Hardware \& Data Collection.} All tasks are performed on a {7-DoF Franka Panda arm}. The workspace is monitored by a multi-view camera system, as shown in Fig.~\ref{fig_setup} (a) comprising a front and rear-side \textbf{RealSense D435}, and a wrist-mounted \textbf{D405} for localized observation. We collect demonstrations via \textbf{leader-follower teleoperation} using two identical Panda arms, providing high-fidelity trajectories for both anchor-centric stabilization and boundary expansion.

\textbf{Tasks.}
We evaluate ACA on four representative tabletop manipulation tasks that demand high-precision execution across varying workspace regions.
\emph{(\romannumeral1) Block Stacking}. Placing a cube onto a target block, requiring the final stack to be stable. \emph{(\romannumeral2) Cup Placement}. Positioning a cup onto a coaster. \emph{(\romannumeral3) Table Cleaning}. Using a brush to sweep a block into a dustpan. \emph{(\romannumeral4) Toy Tidying}. Place toys into a storage box. Real‑robot executions of these tasks are shown in Fig.~\ref{fig_cases}.

\textbf{Metrics.}
To assess robustness against geometric variation, we define \textbf{Region-Level Success Rates (S@i)}. Inspired by Recall@\(K\) metrics in information retrieval~\cite{radford2021learning, nogueira2019passage}, we categorize the workspace into three nested rectangular regions of increasing difficulty, as illustrated in Fig.~\ref{fig_setup} (b):  
\textbf{S@1} (core, 25\% area), \textbf{S@2} (medium, 50\% area), and \textbf{S@3} (near‑boundary, 90\% area). 
Unless otherwise specified, we sample initial object placements uniformly within each region and report the success rate over 20 evaluation trajectories per task.

\textbf{Implementation Details.}
We instantiate ACA using flow-matching VLA policies, specifically $\pi_{0}$ and $\pi_{0.5}$~\cite{black2024pi_0, black2025pi_05}. All models are trained using the Adam optimizer with a batch size of 64 on 8 NVIDIA H200 GPUs. Both stages use an action horizon of 16. 
\textbf{Stage~1 (Stabilization).} We train the Action Expert for 20K steps. The learning rate (LR) follows a linear warmup from $0$ to $5 \times 10^{-5}$ over the first 1,500 steps, followed by a cosine decay to $2.5 \times 10^{-5}$.
\textbf{Boundary Mining.} The deviation score $e(p)$ is computed as the $L_1$ norm averaged across 7 dimensions (position, orientation, and gripper state). The number of mined boundary conditions is budget-dependent, with $k \in \{2, 3, 5\}$ corresponding to interaction budgets of $\{50, 100, 150\}$ trajectories, respectively.
\textbf{Stage~2 (Adaptation).} We perform 10K steps of residual fine-tuning without warmup. The LR starts at $2.5 \times 10^{-5}$ and follows a cosine schedule. We employ LoRA with a rank and $\alpha$ of 32.

\subsection{Main Results}
\label{subsec_main_results}

\textbf{Experimental Protocol.} We compare ACA with the baseline on success rates across three spatial regimes (S@1–S@3) under varying data budgets (Table~\ref{tbl_main}). Anchor configuration is fixed to 6 anchors, distributed as the Center Rect type in Fig.~\ref{fig_anchor_layout}.
\textbf{Baseline Definition.}  The baseline ($\pi_{0.5}$) uses standard VLA fine‑tuning with maximal‑diversity sampling: every demonstration is collected at a distinct, non‑repeated condition.
\textbf{Analysis.} 
\textit{(\romannumeral1) \textbf{Consistent gains across data scales}.} ACA significantly outperforms the diversity-first baseline at every budget level. For a budget of $N=100$, ACA achieves a 72.5\% mean success rate, yielding a +40.8\% absolute improvement over the baseline.
\textit{(\romannumeral2) \textbf{Alleviating the diversity trap.}} The baseline frequently collapses to zero successes in near-boundary regions (S@3) due to insufficient per-condition density. ACA alleviates this by first stabilizing the policy skeleton; for example, in Block Stacking at $N=150$, ACA reaches an 80\% success rate (16~/~20) compared to only 20\% for the baseline.
\textit{(\romannumeral3) \textbf{Sample efficiency.}} ACA exhibits efficiency gains by prioritizing structured repetition. Notably, ACA at the lowest budget ($N=50$, 46.3\% mean) achieves performance comparable to the baseline at triple the budget ($N=150$, 52.9\% mean).
\textit{(\romannumeral4) \textbf{Robustness to distribution shifts.}} As task conditions move from the central core (S@1) to the boundary (S@3), baseline performance drops precipitously. In contrast, ACA maintains higher stability throughout the entire workspace, confirming that anchor-based consolidation provides a more robust foundation for adaptation.

\subsection{Ablation Studies}

\begin{figure}[t]
    \centering
    \includegraphics[width=\columnwidth]{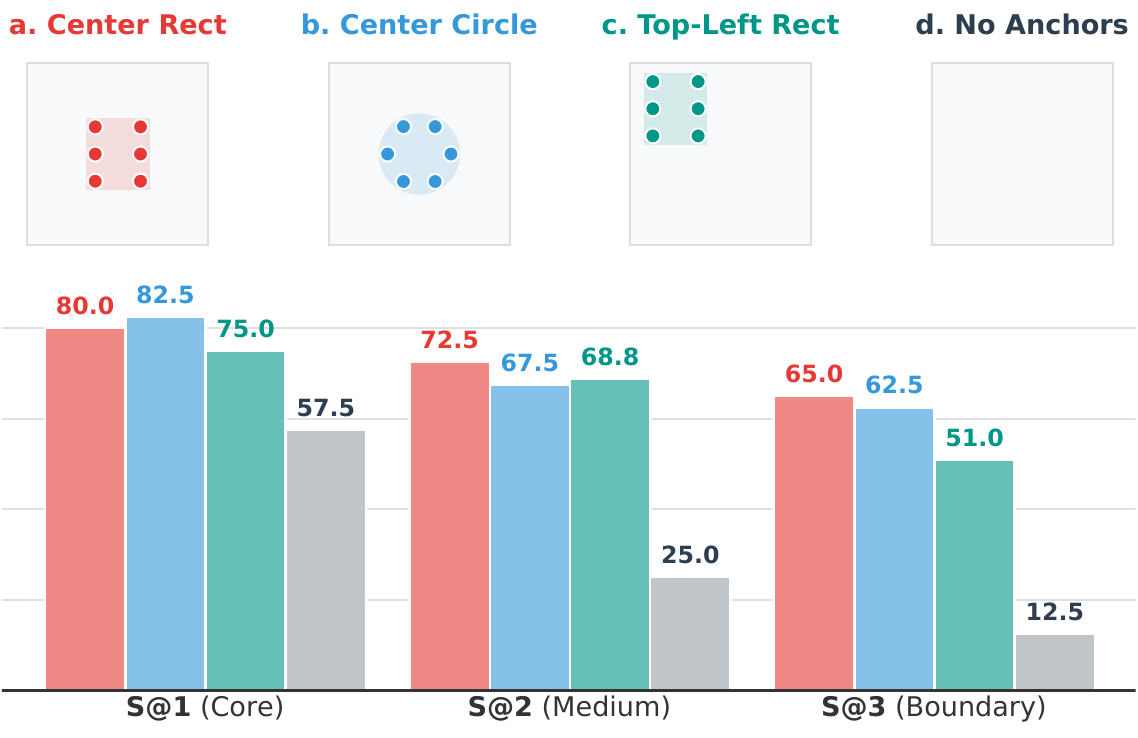}
    \caption{\textbf{Sensitivity to anchor spatial distribution.} \textit{Top}: Four onfigurations (a–d) are evaluated under a fixed budget of \(N=100\). \textit{Bottom}: Region-level success rates across the configurations.}
    \label{fig_anchor_layout}
\end{figure}

\textbf{Anchor Spatial Distribution.}
We examine the impact of anchor spatial distribution by fixing the total budget at $N=100$ trajectories and the number of anchors at $K=6$, as shown in Fig.~\ref{fig_anchor_layout}.
\textbf{\textit{(\romannumeral1) Necessity of anchoring.}} All anchor configurations consistently outperform the baseline across all regions. Notably, even the spatially biased top-left layout yields a {$\mathbf{4\times}$ \textbf{improvement}} in boundary (S@3) success (51.0\% vs. 12.5\%), confirming that a stable policy skeleton is more vital than uniform coverage.
\textbf{\textit{(\romannumeral2) Centralization and reach.}} Centralized anchors (a, b) yield the highest success by minimizing the maximum extrapolation distance to any workspace point. This strategy effectively balances local sampling density with global stability, resulting in the most graceful performance degradation.
\textbf{\textit{(\romannumeral3) Robustness to geometric patterns.}} The performance difference between rectangular and circular arrangements appears to be relatively marginal. This result suggests that the benefits of ACA may be largely invariant to the specific geometric pattern, provided that the anchors maintain a sufficiently centralized distribution to influence the surrounding workspace.

\begin{figure}[t]
    \centering
    \includegraphics[width=\columnwidth]{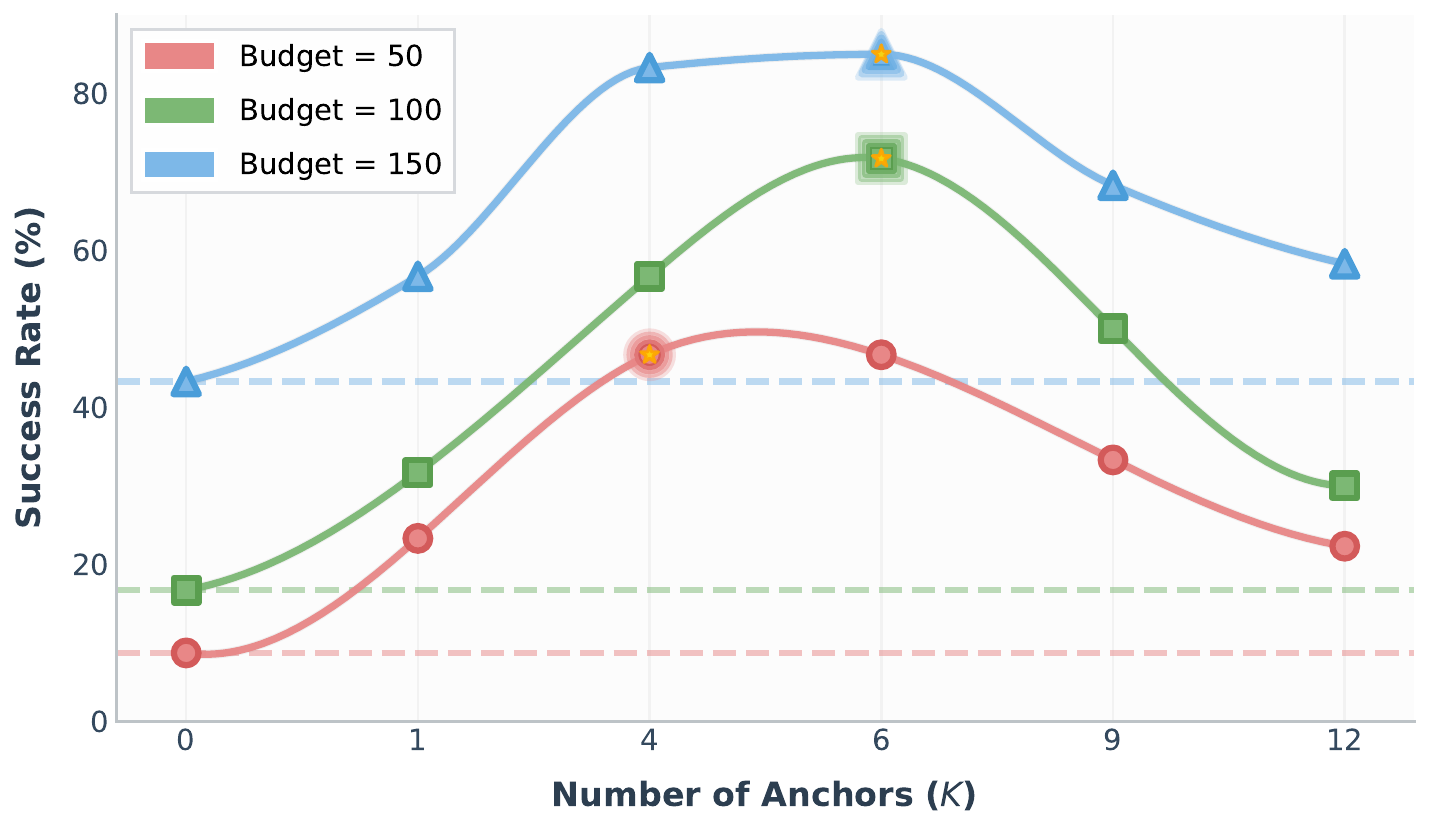} 
    \caption{\textbf{Sensitivity to the number of anchors} ($\mathbf{K}$). Success rates exhibit a consistent inverted-U trend across different budgets.}
    \label{fig_anchor_num}
\end{figure}

\textbf{Impact of Anchor Quantity.}
\label{subsec_anchor_num}
We evaluate the impact of anchor quantity by varying $K \in \{0, \dots, 12\}$ under a fixed Center Rect layout, as shown Fig.~\ref{fig_anchor_num}.
\textbf{\textit{(\romannumeral1) Observation of the Inverted-U trend.}} Across all tested budgets, task success appears non-monotonic with respect to $K$. 
This suggests that the benefits of spatial coverage are effectively bottlenecked by the local sampling density required to suppress estimation noise.
\textbf{\textit{(\romannumeral2) Validation of the Diversity Trap.}} As $K$ increases beyond the optimum, success rates tend to decline despite the broader spatial coverage. This supports our hypothesis that excessive diversity under a fixed budget dilutes the sampling density per condition, potentially leading to the ``diversity trap" where estimation noise destabilizes the policy.
\textbf{\textit{(\romannumeral3) Budget-dependent shift in optimum.}} The results suggest that the optimal $K$ may scale with the total interaction budget. For instance, the performance peak broadens as $N$ increases from 50 to 150, which is consistent with our theoretical bound predicting that larger budgets can support higher diversity without sacrificing stability.

\textbf{Ablation on Boundary Mining and Residual Adaptation.}
We fix the total budget at $N=100$ trajectories with 6 anchors, as shown in Table~\ref{tbl_ablate_mining_residual_rate}.
\textbf{\textit{(\romannumeral1) Synergy of sampling and architecture.}} Combining error‑driven mining with residual updates yields the highest mean success rate, showing that robust adaptation requires both identifying critical data gaps and a stable update mechanism.
\textbf{\textit{(\romannumeral2) Effectiveness of error-driven mining.}} Adding the mining phase boosts performance in boundary regions (S@3). Compared to a random‑sampling baseline of equal size, error‑driven mining raises S@3 success, confirming that teacher‑forced deviation reliably locates extrapolation risks.
\textbf{\textit{(\romannumeral3) Stability via residual updates.}} While full-parameter fine-tuning allows for boundary expansion, it results in performance trade-offs in the consolidated core. The residual pathway preserves S@1 \& S@2 stability while patching the vector field in under‑supported areas.

\begin{table}[htb]
\caption{\textbf{Ablation study of Boundary Mining and Residual Adaptation.} Values denote mean success rates across four tasks.}
\label{tbl_ablate_mining_residual_rate}
\centering
\small
\begin{tabular}{cc|ccc|c}
\toprule
Mining & Residual & S@1 & S@2 & S@3 & Mean\\
\midrule
\texttimes & \texttimes & 67.5 & 58.8 & 45.0 & 57.1 \\
\texttimes & \checkmark & 73.8 & 62.5 & 40.0 & 58.8\\
\checkmark & \texttimes & 68.8 & 66.3 & 57.5 & 64.2 \\
\rowcolor{lightgreen}
\checkmark & \checkmark & 80.0 & 72.5 & 65.0 & \textbf{72.5} \\
\bottomrule
\end{tabular}
\end{table}

\begin{figure}[t]
    \centering
    \includegraphics[width=0.82\columnwidth]{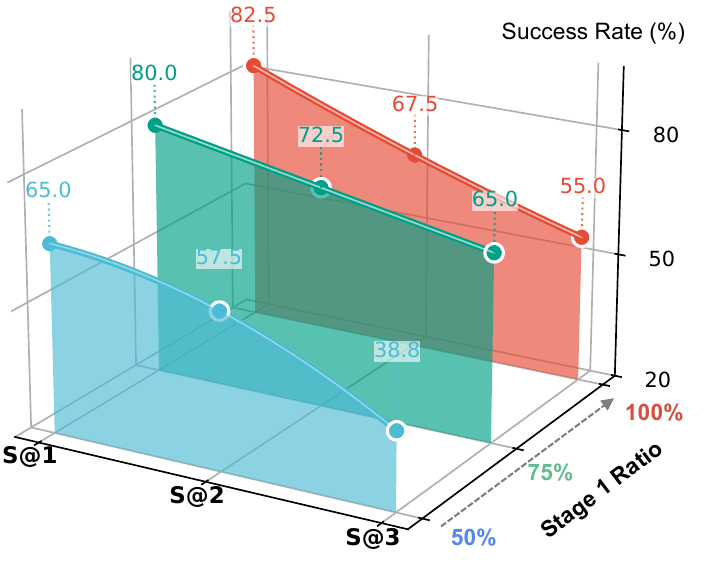} 
    \caption{\textbf{Sensitivity to anchor consolidation budget} ($\mathbf{N_A}$). Region-level success rates under a fixed total budget of $N=100$.}
    \label{fig_anchor_ratio}
\end{figure}

\textbf{Ablation on Anchor Budget ($N_A$) Ratio.}
We examine the trade-off between anchor consolidation and boundary expansion by varying $N_A$ under a fixed budget of $N=100$, as shown in Fig.~\ref{fig_anchor_ratio}.
\textbf{\textit{(\romannumeral1) Stabilization as a prerequisite.}} Reducing $N_A$ to 50 results in a global performance decline, with core success (S@1) dropping from 80.0\% to 65.0\%. This outcome validates that an insufficient anchor budget fails to suppress estimation noise, providing a fragile policy skeleton that cannot effectively support subsequent boundary adaptation.
\textbf{\textit{(\romannumeral2) Optimal budget partitioning.}} While $N_A=100$ maximizes central success (82.5\%), allocating $N_A=80$ trajectories yields the highest overall success (72.5\% Mean) and superior boundary performance (65.0\% vs. 55.0\% in S@3). This indicates that once the policy skeleton is consolidated, dedicating a minor portion of the budget to targeted Stage 2 mining provides larger marginal gains than further anchor repetition.

\begin{figure}[t]
    \centering
    \includegraphics[width=\columnwidth]{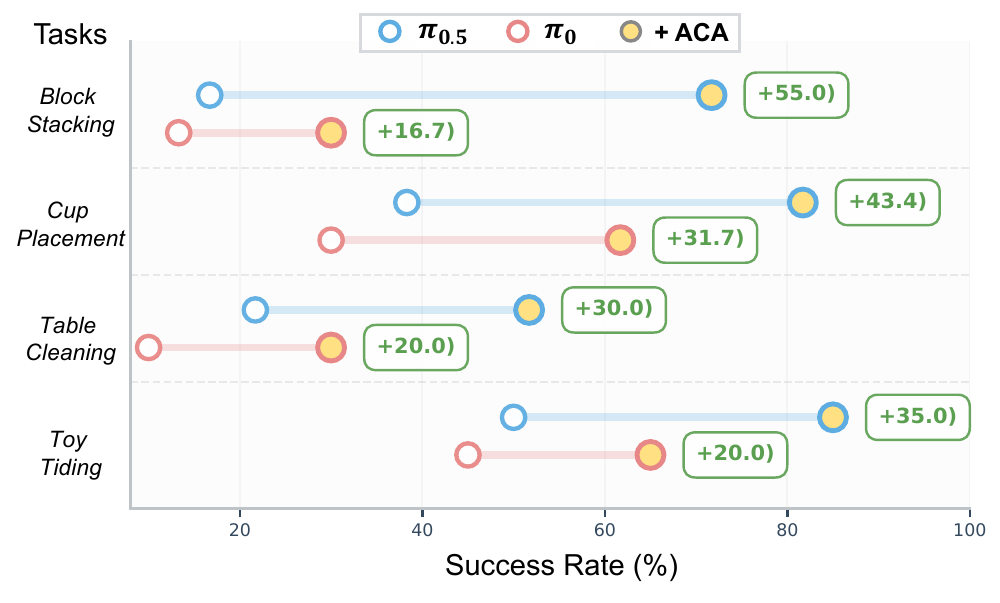} 
    \caption{\textbf{Performance gains across different base VLA models.} ACA yields consistent performance increases on four tasks.}
    \label{fig_gain}
\end{figure}

\textbf{Performance Gains across Different VLAs.} 
We evaluate the architectural compatibility of ACA by comparing adaptation gains on $\pi_0$ and $\pi_{0.5}$ backbones, as shown in Fig.~\ref{fig_gain}.
\textbf{\textit{(\romannumeral1) Architecture-agnostic efficacy.}} ACA improves both models across all tasks, showing that the anchor‑centric principle acts as a universal data‑allocation strategy for flow‑based VLAs.
\textbf{\textit{(\romannumeral2) Synergy with base capabilities.}} While ACA enhances both models, the stronger $\pi_{0.5}$ achieves larger absolute gains, indicating that higher initial representation quality enables more effective anchor‑based refinement.


\section{Conclusion}

We formalize the \textbf{Coverage–Density Trade-off} in VLA adaptation, identifying a \textbf{diversity trap} where excessive variety under strict budgets destabilizes learning, evidenced by a characteristic \textbf{inverted-U performance trend}. Our framework, \textbf{Anchor-Centric Adaptation (ACA)}, operationalizes this insight by establishing a stable policy skeleton at anchors before expanding coverage via error-driven mining and constrained residuals. Real-robot experiments validate that ACA significantly improves task reliability, demonstrating that \textbf{structured repetition} is a more potent scaling lever than raw diversity for robust robotic deployment in data-scarce regimes.
\textbf{\textit{Future Work.}} Future research includes extending the anchoring principle to long‑horizon manipulation via hierarchical decomposition and study autonomous anchor‑discovery for resource optimization to scale generalizable skills across broader data distributions.

\section*{Acknowledgements}

This research is supported by the National Research Foundation, Singapore under its AI Singapore Programme (AISG Award No: AISG3-RP-2022-030).


\section*{Impact Statement}

This work provides a principled framework for the data-efficient adaptation of VLA models, enhancing the reliability and safety of embodied agents in data-scarce environments. By formalizing the Coverage–Density trade-off, ACA mitigates the risks of unpredictable behaviors caused by the diversity trap, ensuring more stable policy skeletons under limited human demonstrations. These contributions support the democratization of robotics, lowering the barrier for deploying high-performance foundation models in specialized, resource-constrained real-world applications.


\bibliography{A-reference}
\bibliographystyle{icml2026}

\newpage
\appendix
\onecolumn
\section{Appendix}

\begin{figure}[!h]
    \centering
    \includegraphics[width=0.67\columnwidth]{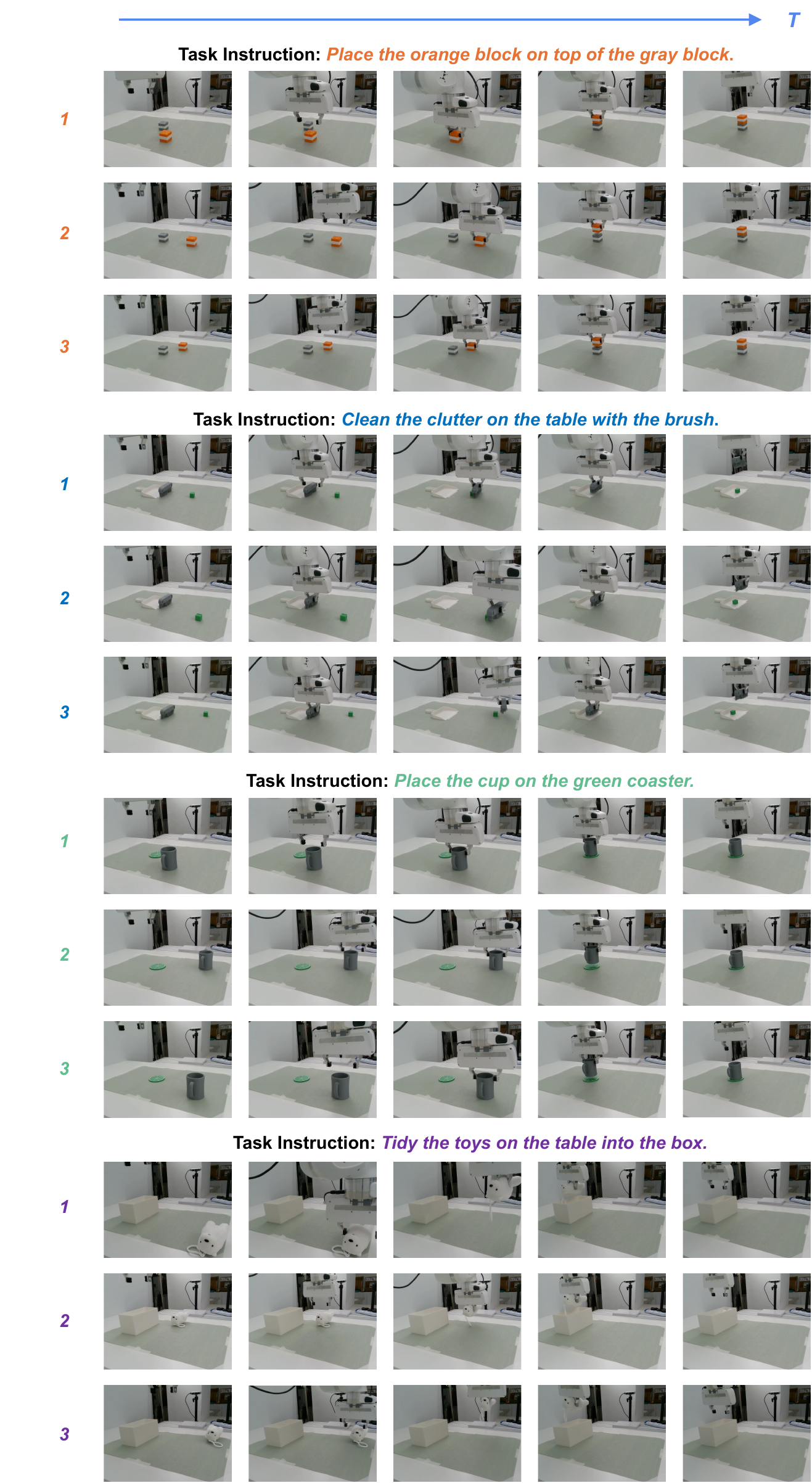} 
    \caption{Visualization of real‑robot rollouts across four tasks: from top to bottom, Block Stacking, Table Cleaning, Cup Placement, and Toy Tidying; each row shows the task instruction and corresponding key frames from the rollout video.}
    \label{fig_cases}
\end{figure}

\subsection{Analysis of the Coverage--Density Trade-off}
\label{app_theory}

This section provides comprehensive foundations for the Coverage-Density Trade-off presented in Section~\ref{sec_theory}. We present: (\romannumeral1) complete proofs with detailed derivations; (\romannumeral2) extensions to high-probability bounds; (\romannumeral3) analysis of non-uniform allocation strategies; (\romannumeral4) robustness analysis under assumption violations; (\romannumeral5) connections to optimal experimental design; and (\romannumeral6) sample complexity lower bounds.

\subsubsection{Notation and Preliminaries}
\label{app:notation}

We consolidate notation used throughout the theoretical analysis:

\begin{table}[htb]
\centering
\caption{Notation summary for theoretical analysis.}
\small
\begin{tabular}{ll}
\toprule
\textbf{Symbol} & \textbf{Description} \\
\midrule
$\mathcal{P} \subset \mathbb{R}^d$ & Condition parameter space (dimension $d$) \\
$\mathcal{Z}$ & Action-state space \\
$p, p_i$ & Condition parameters (queries and anchors) \\
$z$ & Action-state variable \\
$t \in [0,1]$ & Flow time parameter \\
$f^\star(z,p,t)$ & Ground-truth conditional vector field \\
$\hat{f}(z,p,t)$ & Learned/estimated vector field \\
$N$ & Total sample budget \\
$K$ & Number of distinct anchor conditions \\
$n_i$ & Number of repeats at anchor $p_i$ ($\sum_i n_i = N$) \\
$h$ & Fill distance: $\sup_{p \in \mathcal{P}} \min_i \|p - p_i\|$ \\
$L$ & Lipschitz constant of $f^\star$ in $p$ \\
$\Lambda$ & Lipschitz constant of $f^\star$ in $z$ \\
$\sigma$ & Sub-Gaussian parameter of noise \\
$C$ & Universal constant in estimation error bound \\
$c$ & Constant in fill distance bound \\
$\delta$ & Confidence parameter for high-probability bounds \\
\bottomrule
\end{tabular}
\label{tab:notation}
\end{table}

\textbf{Sub-Gaussian Random Variables.} A random variable $X$ is $\sigma$-sub-Gaussian if for all $\lambda \in \mathbb{R}$,
\begin{equation}
\mathbb{E}[e^{\lambda X}] \le e^{\sigma^2 \lambda^2/2}.
\end{equation}
This implies the tail bound $\mathbb{P}(|X| \ge t) \le 2e^{-t^2/(2\sigma^2)}$ and moment bound $\mathbb{E}[|X|^p]^{1/p} \le C_p \sigma$ for constants $C_p$ depending only on $p$.

\textbf{Fill Distance and Covering.} For a set of points $\{p_i\}_{i=1}^K \subset \mathcal{P}$, the fill distance quantifies worst-case coverage. For quasi-uniform placement (e.g., grid, low-discrepancy sequences~\cite{niederreiter1992random}), standard results give $h \asymp K^{-1/d}$ up to logarithmic factors.

\subsection{Theory: Coverage--Density Trade-off}
\label{app:coverage_density_theory}

This section formalizes the coverage--density trade-off induced by a finite adaptation budget.  The purpose of the analysis is not to claim that a generic neural policy is exactly a nearest-anchor estimator, but to isolate a statistically transparent mechanism: with a fixed number of demonstrations, increasing the number of task anchors improves coverage of the task-parameter space but reduces the number of samples available per anchor.  We first prove this trade-off for a nearest-anchor surrogate, then state carefully what changes for stable interpolators, non-uniform allocation, discontinuous task regions, and two-stage sampling.

\paragraph{Notation.}
Let $\mathcal P \subset \mathbb R^d$ be a compact task-parameter domain equipped with norm $\|\cdot\|$.  Let $\{p_i\}_{i=1}^K \subset \mathcal P$ be adaptation anchors and define the fill distance
\begin{equation}
    h_K := \sup_{p\in\mathcal P} \min_{1\le i\le K} \|p-p_i\| .
\end{equation}
For each $p\in\mathcal P$, let
\begin{equation}
    i(p) \in \arg\min_{1\le i\le K} \|p-p_i\|
\end{equation}
be an arbitrary nearest-anchor index.  The target vector field or policy is denoted by $f^\star(z,p,t)$, where $z$ is the state or conditioning variable and $t\in[0,1]$ is the flow-matching time variable.  The learned predictor is $\hat f$.  We write $\xi$ for all sampling and optimization randomness used to construct $\hat f$.  Unless otherwise stated, expectations are taken with respect to $\xi$ and, when explicitly shown, a fixed evaluation distribution $\nu$ over $(z,t)$.

\paragraph{Assumptions.}
We use the following assumptions only where they are explicitly invoked.

\begin{assumption}[Lipschitz regularity in task parameter]
\label{ass:lipschitz_p}
There exists $L<\infty$ such that for all $z$, $t$, and $p,p'\in\mathcal P$,
\begin{equation}
    \| f^\star(z,p,t)-f^\star(z,p',t) \| \le L\|p-p'\|.
\end{equation}
\end{assumption}

\begin{assumption}[Nearest-anchor surrogate]
\label{ass:nearest_anchor}
For the purpose of the baseline analysis, the predictor at an arbitrary task parameter uses the predictor at its nearest anchor:
\begin{equation}
    \hat f(z,p,t)=\hat f(z,p_{i(p)},t).
\end{equation}
\end{assumption}

Assumption~\ref{ass:nearest_anchor} is deliberately restrictive.  It should be read as defining an analyzable surrogate, not as a claim that the neural policy implemented in experiments is a nearest-neighbor rule.  Extensions beyond this surrogate require additional stability assumptions and are discussed below.

\subsubsection{Coverage--Density Decomposition}
\label{app:prop_decomp_revised}

\begin{proposition}[Coverage--density decomposition]
\label{prop:coverage_density_decomp_revised}
Suppose Assumptions~\ref{ass:lipschitz_p} and~\ref{ass:nearest_anchor} hold.  Then for any evaluation distribution $\nu$ over $(z,t)$,
\begin{equation}
\label{eq:coverage_density_decomp}
    \sup_{p\in\mathcal P}
    \mathbb E_{(z,t)\sim\nu,\xi}
    \bigl[\|\hat f(z,p,t)-f^\star(z,p,t)\|\bigr]
    \le
    \max_{1\le i\le K}
    \mathbb E_{(z,t)\sim\nu,\xi}
    \bigl[\|\hat f(z,p_i,t)-f^\star(z,p_i,t)\|\bigr]
    + Lh_K .
\end{equation}
The same inequality also holds pointwise for any fixed $(z,t)$ if the expectation over $(z,t)\sim\nu$ is removed.
\end{proposition}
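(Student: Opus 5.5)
The argument is a pointwise triangle inequality, followed by integration and a supremum. Fix an arbitrary $p\in\mathcal P$ and let $i=i(p)$ be the nearest-anchor index chosen in the notation paragraph. The index depends only on the anchors and $p$, not on $(z,t)$ or $\xi$. For every realization of $\xi$ and every $(z,t)$, Assumption~\ref{ass:nearest_anchor} gives $\hat f(z,p,t)=\hat f(z,p_i,t)$. Adding and subtracting $f^\star(z,p_i,t)$ then yields
\begin{equation*}
\|\hat f(z,p,t)-f^\star(z,p,t)\|\le \|\hat f(z,p_i,t)-f^\star(z,p_i,t)\| + \|f^\star(z,p_i,t)-f^\star(z,p,t)\|.
\end{equation*}
By Assumption~\ref{ass:lipschitz_p}, the second term is at most $L\|p-p_i\|$. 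Since $i$ is a nearest index, $\|p-p_i\|=\min_j\|p-p_j\|\le h_K$. This bound is deterministic and uniform in $(z,t,\xi)$.

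Next, I take expectations over $(z,t)\sim\nu$ and $\xi$ on both sides, using linearity and monotonicity. The deterministic term $Lh_K$ passes through unchanged. The first term becomes the anchor-$i$ expected error, which is at most the maximum over $1\le j\le K$. The resulting right-hand side does not depend on $p$, so taking $\sup_{p\in\mathcal P}$ gives \eqref{eq:coverage_density_decomp}.

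For the pointwise version, I skip the $\nu$-integration and take only $\mathbb E_\xi$ at fixed $(z,t)$; the same chain of inequalities applies.

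The only step needing care is measurability and integrability. The expectations should be well defined, which I would assume or interpret as outer expectations; if the anchor errors have infinite expectation, the bound holds trivially. The argument also requires that $i(p)$ be chosen independently of $(z,t,\xi)$. This is needed so that the anchor-$i$ term is a genuine per-anchor expectation bounded by the max, rather than a random mixture of anchors. The definition of $i(p)$ in the notation paragraph guarantees this. Compactness of $\mathcal P$ ensures $h_K<\infty$, so the bound is nonvacuous.
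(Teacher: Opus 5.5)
Your proposal is correct and matches the paper's proof step for step: the nearest-anchor substitution, the triangle inequality, the Lipschitz bound by $Lh_K$, then expectation, bounding by the maximum over anchors, and the supremum over $p$. Your remarks that $i(p)$ must be chosen independently of $(z,t,\xi)$ and on integrability are sensible refinements that the paper leaves implicit.
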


\begin{proof}
Fix any $p\in\mathcal P$ and any $(z,t)$.  By Assumption~\ref{ass:nearest_anchor},
\begin{equation}
    \hat f(z,p,t)=\hat f(z,p_{i(p)},t).
\end{equation}
The triangle inequality gives
\begin{align}
    \|\hat f(z,p,t)-f^\star(z,p,t)\|
    &\le
    \|\hat f(z,p_{i(p)},t)-f^\star(z,p_{i(p)},t)\| \\
    &\quad +
    \|f^\star(z,p_{i(p)},t)-f^\star(z,p,t)\| .
\end{align}
Assumption~\ref{ass:lipschitz_p} and the definition of $h_K$ imply
\begin{equation}
    \|f^\star(z,p_{i(p)},t)-f^\star(z,p,t)\|
    \le L\|p_{i(p)}-p\|
    \le Lh_K .
\end{equation}
Taking expectation over $\xi$ and optionally over $(z,t)\sim\nu$, then taking the supremum over $p$, yields~\eqref{eq:coverage_density_decomp}.
\end{proof}

\paragraph{Interpretation.}
The first term in~\eqref{eq:coverage_density_decomp} is a density term: it decreases when more samples are assigned to each anchor.  The second term is a coverage term: it decreases when anchors form a finer cover of $\mathcal P$.  The trade-off arises because, under a fixed total budget, increasing $K$ improves $h_K$ but reduces the sample count per anchor.

\subsubsection{High-Probability Anchor Bound}
\label{app:high_prob_revised}

The next result gives a high-probability version under an explicit repeated-query observation model.  This model is intentionally stated narrowly: it applies to fixed $(z,t)$ or to settings where anchor errors are already controlled uniformly over the relevant evaluation set.  It should not be interpreted as a uniform guarantee over a continuous state-time space without an additional covering or function-class argument.

\begin{assumption}[Sub-Gaussian anchor observations]
\label{ass:subgaussian_anchor}
For a fixed $(z,t)$ and each anchor $p_i$, suppose we observe
\begin{equation}
    y_{ij}=f^\star(z,p_i,t)+\varepsilon_{ij},
    \qquad j=1,\ldots,n_i,
\end{equation}
where $\varepsilon_{ij}\in\mathbb R^m$ are independent, mean-zero, and coordinate-wise $\sigma_i$-sub-Gaussian.  The anchor estimate is the sample mean
\begin{equation}
    \hat f(z,p_i,t)=\frac{1}{n_i}\sum_{j=1}^{n_i} y_{ij}.
\end{equation}
\end{assumption}

\begin{lemma}[Simultaneous concentration at anchors]
\label{lem:anchor_concentration_revised}
Under Assumption~\ref{ass:subgaussian_anchor}, for any $\delta\in(0,1)$, with probability at least $1-\delta$, simultaneously for all anchors $i=1,\ldots,K$,
\begin{equation}
\label{eq:anchor_concentration}
    \|\hat f(z,p_i,t)-f^\star(z,p_i,t)\|_2
    \le
    \sigma_i
    \sqrt{\frac{2m\log(2mK/\delta)}{n_i}} .
\end{equation}
For scalar outputs $(m=1)$, this reduces to
\begin{equation}
    |\hat f(z,p_i,t)-f^\star(z,p_i,t)|
    \le
    \sigma_i
    \sqrt{\frac{2\log(2K/\delta)}{n_i}} .
\end{equation}
\end{lemma}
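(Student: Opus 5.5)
The argument is the standard chain: a scalar sub-Gaussian tail bound for each coordinate of each anchor's sample mean, then a union bound over all $mK$ (anchor, coordinate) pairs, then a conversion from a coordinate-wise bound to an $\ell_2$ bound.

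\textbf{Step 1 (one coordinate of one anchor).} Fix an anchor $i$ and a coordinate $k\in\{1,\ldots,m\}$. The error is
\begin{equation}
\hat f_k(z,p_i,t)-f^\star_k(z,p_i,t)=\frac{1}{n_i}\sum_{j=1}^{n_i}\varepsilon_{ij,k}.
\end{equation}
The summands are independent, mean-zero and $\sigma_i$-sub-Gaussian. Multiplying their moment generating functions gives
\begin{equation}
\mathbb E\bigl[e^{\lambda \bar\varepsilon}\bigr]\le e^{\sigma_i^2\lambda^2/(2n_i)},
\end{equation}
so the average is $\sigma_i/\sqrt{n_i}$-sub-Gaussian. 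The tail bound stated in the preliminaries then yields
\begin{equation}
\mathbb P\bigl(|\bar\varepsilon|\ge u\bigr)\le 2e^{-n_iu^2/(2\sigma_i^2)}.
\end{equation}

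\textbf{Step 2 (choice of threshold and union bound).} Set the per-pair failure probability to $\delta/(mK)$, i.e.
\begin{equation}
u_i=\sigma_i\sqrt{\frac{2\log(2mK/\delta)}{n_i}}.
\end{equation}
A union bound over the $mK$ pairs $(i,k)$ shows that, with probability at least $1-\delta$, every coordinate error at every anchor satisfies $|\bar\varepsilon_{i,k}|\le u_i$. This union bound does not use independence across anchors; independence is needed only within each sample mean.

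\textbf{Step 3 (from coordinates to $\ell_2$).} On this event, for each $i$,
\begin{equation}
\|\hat f(z,p_i,t)-f^\star(z,p_i,t)\|_2\le\sqrt{m}\,\max_k|\bar\varepsilon_{i,k}|\le\sqrt m\,u_i=\sigma_i\sqrt{\frac{2m\log(2mK/\delta)}{n_i}},
\end{equation}
which is exactly \eqref{eq:anchor_concentration}. Setting $m=1$ gives the scalar statement.

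No step is a real obstacle; only two conventions need care.
\begin{itemize}
\item The sub-Gaussian definition uses the constant $e^{\sigma^2\lambda^2/2}$, and the tail bound must be taken as $2e^{-u^2/(2\sigma^2)}$ so that the constants match.
\item The $\sqrt m$ factor is loose: a direct $\chi^2$-type norm bound would give $\sqrt m+\sqrt{\log(1/\delta)}$ scaling instead. The coordinate-wise union bound is nonetheless what produces the stated form, including the $m$ inside the logarithm, so I would use it.
\end{itemize}
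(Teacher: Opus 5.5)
Your proposal is correct and is essentially the paper's proof: each coordinate of the sample-mean error is $\sigma_i/\sqrt{n_i}$-sub-Gaussian, the threshold is chosen so that each of the $mK$ anchor--coordinate pairs fails with probability $\delta/(mK)$, a union bound gives the simultaneous coordinate-wise bound, and $\|u\|_2\le\sqrt{m}\,\|u\|_\infty$ converts it to the Euclidean statement. One small caveat concerns only your side remark: the $\sqrt{m}+\sqrt{\log(1/\delta)}$ norm scaling would need independence (or joint sub-Gaussianity) across coordinates, which the assumption does not provide, and this does not affect your argument.
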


\begin{proof}
For any fixed anchor $i$ and coordinate $r\in\{1,\ldots,m\}$, the empirical mean error is $\sigma_i/\sqrt{n_i}$-sub-Gaussian.  Hence
\begin{equation}
    \mathbb P\left(
    |\hat f_r(z,p_i,t)-f^\star_r(z,p_i,t)|
    \ge
    \sigma_i\sqrt{\frac{2\log(2mK/\delta)}{n_i}}
    \right)
    \le \frac{\delta}{mK} .
\end{equation}
A union bound over all $mK$ coordinate-anchor pairs gives the coordinate-wise bound simultaneously.  The Euclidean bound follows from $\|u\|_2\le \sqrt m\|u\|_\infty$.
\end{proof}

\begin{theorem}[High-probability coverage--density bound]
\label{thm:high_prob_revised}
Suppose Assumptions~\ref{ass:lipschitz_p},~\ref{ass:nearest_anchor}, and~\ref{ass:subgaussian_anchor} hold for a fixed $(z,t)$.  Let $n_{\min}:=\min_i n_i$ and $\sigma_{\max}:=\max_i \sigma_i$.  Then with probability at least $1-\delta$,
\begin{equation}
\label{eq:hp_cd_bound}
    \sup_{p\in\mathcal P}
    \|\hat f(z,p,t)-f^\star(z,p,t)\|_2
    \le
    \sigma_{\max}
    \sqrt{\frac{2m\log(2mK/\delta)}{n_{\min}}}
    + Lh_K .
\end{equation}
\end{theorem}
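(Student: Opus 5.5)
The plan is to combine the pointwise form of Proposition~\ref{prop:coverage_density_decomp_revised} with the simultaneous concentration of Lemma~\ref{lem:anchor_concentration_revised} on a single good event. Fix $(z,t)$ and let $\mathcal G$ be the event in Lemma~\ref{lem:anchor_concentration_revised}, on which~\eqref{eq:anchor_concentration} holds for every anchor $i=1,\ldots,K$; by that lemma, $\mathbb P(\mathcal G)\ge 1-\delta$. Everything else is deterministic on $\mathcal G$, so no further probabilistic argument is needed. In particular, no union bound over the continuum of $p\in\mathcal P$ is required, because the nearest-anchor surrogate only ever queries the finitely many anchors.

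On $\mathcal G$, I fix an arbitrary $p\in\mathcal P$. I apply the same triangle-inequality step as in the proof of Proposition~\ref{prop:coverage_density_decomp_revised}, but realization-wise, without taking expectations. By Assumption~\ref{ass:nearest_anchor},
\begin{equation*}
\|\hat f(z,p,t)-f^\star(z,p,t)\|_2 \le \|\hat f(z,p_{i(p)},t)-f^\star(z,p_{i(p)},t)\|_2 + \|f^\star(z,p_{i(p)},t)-f^\star(z,p,t)\|_2 .
\end{equation*}
Assumption~\ref{ass:lipschitz_p} and the definition of $h_K$ bound the second term by $Lh_K$. The first term is bounded on $\mathcal G$ by $\sigma_{i(p)}\sqrt{2m\log(2mK/\delta)/n_{i(p)}}$. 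Since $\sigma_{i(p)}\le\sigma_{\max}$ and $n_{i(p)}\ge n_{\min}$, this is at most $\sigma_{\max}\sqrt{2m\log(2mK/\delta)/n_{\min}}$. That bound is uniform in $p$, so taking the supremum over $p$ gives~\eqref{eq:hp_cd_bound} on $\mathcal G$.

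The only point needing care is the norm convention. Assumption~\ref{ass:lipschitz_p} is stated for a generic norm on the output. I will take it to be Euclidean, matching the $\|\cdot\|_2$ in the conclusion; otherwise a norm-equivalence constant must be absorbed into $L$. I will also remark that the event $\mathcal G$ does not depend on $p$, which is exactly why the supremum can be moved inside the probability. Beyond these, I expect no real obstacle, since the result is a direct composition of the two earlier statements.
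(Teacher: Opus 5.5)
Your proposal is correct and takes essentially the paper's route: work on the single event of Lemma~\ref{lem:anchor_concentration_revised}, then apply the deterministic triangle-inequality argument of Proposition~\ref{prop:coverage_density_decomp_revised} realization-wise, using $\sigma_{i(p)}\le\sigma_{\max}$ and $n_{i(p)}\ge n_{\min}$. Your remark on the norm convention, taking the norm in Assumption~\ref{ass:lipschitz_p} to be Euclidean or absorbing an equivalence constant into $L$, is a detail the paper leaves implicit.
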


\begin{proof}
On the event of Lemma~\ref{lem:anchor_concentration_revised}, every anchor error is bounded by the first term in~\eqref{eq:hp_cd_bound}.  Applying the deterministic triangle-inequality argument from Proposition~\ref{prop:coverage_density_decomp_revised} gives the result.
\end{proof}

\subsubsection{Optimal Number of Anchors Under Uniform Allocation}
\label{app:optimal_k_revised}

Assume the anchors are quasi-uniform in the sense that for a geometry-dependent constant $c_{\mathcal P}$,
\begin{equation}
\label{eq:fill_distance_scaling}
    h_K \le c_{\mathcal P}K^{-1/d}.
\end{equation}
Assume further that a total budget $N$ is split uniformly across anchors, so $n_i=N/K$, and that the expected anchor estimation error satisfies
\begin{equation}
\label{eq:anchor_error_scaling}
    \max_i
    \mathbb E\|\hat f(z,p_i,t)-f^\star(z,p_i,t)\|
    \le C\sigma\sqrt{\frac{K}{N}} .
\end{equation}
Combining~\eqref{eq:coverage_density_decomp},~\eqref{eq:fill_distance_scaling}, and~\eqref{eq:anchor_error_scaling} gives
\begin{equation}
\label{eq:uniform_tradeoff}
    \mathcal E(K)
    \le
    C\sigma\sqrt{\frac{K}{N}}
    + Lc_{\mathcal P}K^{-1/d} .
\end{equation}
The first term increases with $K$ because each anchor receives fewer samples; the second term decreases with $K$ because the anchor set covers $\mathcal P$ more finely.

Ignoring integer constraints, minimizing the right-hand side of~\eqref{eq:uniform_tradeoff} yields
\begin{equation}
\label{eq:k_star_uniform}
    K^\star
    \asymp
    \left(\frac{L^2 c_{\mathcal P}^2 N}{\sigma^2}\right)^{\frac{d}{d+2}},
\end{equation}
and the optimized error scales as
\begin{equation}
\label{eq:rate_uniform}
    \mathcal E(K^\star)
    \asymp
    (C\sigma)^{\frac{2}{d+2}}
    (Lc_{\mathcal P})^{\frac{d}{d+2}}
    N^{-\frac{1}{d+2}} .
\end{equation}
With the high-probability bound in Theorem~\ref{thm:high_prob_revised}, the same calculation holds up to logarithmic factors in $K$ and $1/\delta$.

\subsubsection{Optimal Non-Uniform Allocation}
\label{app:nonuniform_revised}

Uniform allocation is suboptimal when anchors have different noise levels or task difficulties.  The following result characterizes the optimal allocation for minimizing the worst anchor-estimation term.

\begin{proposition}[Optimal non-uniform allocation]
\label{prop:nonuniform_revised}
Fix anchor locations $\{p_i\}_{i=1}^K$.  Suppose the estimation error at anchor $i$ is bounded by
\begin{equation}
    \frac{C\sigma_i}{\sqrt{n_i}},
\end{equation}
where $\sigma_i\ge 0$ and $\sum_{i=1}^K n_i=N$.  If sample counts are allowed to be real-valued and at least one $\sigma_i$ is nonzero, the allocation minimizing
\begin{equation}
    \max_{1\le i\le K}\frac{C\sigma_i}{\sqrt{n_i}}
\end{equation}
 is
\begin{equation}
\label{eq:nonuniform_allocation}
    n_i^\star
    =
    N\frac{\sigma_i^2}{\sum_{j=1}^K\sigma_j^2} .
\end{equation}
The achieved worst-anchor estimation error is
\begin{equation}
\label{eq:nonuniform_error}
    C\sqrt{\frac{\sum_{j=1}^K\sigma_j^2}{N}} .
\end{equation}
Integer-valued allocations achieve the same rate up to rounding effects.
\end{proposition}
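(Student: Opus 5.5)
We treat this as a min–max allocation problem over the simplex $\{n\in\mathbb R_{\ge 0}^K:\sum_i n_i=N\}$. It is convenient to work with the square of the objective: minimizing $\max_i C\sigma_i/\sqrt{n_i}$ is equivalent to minimizing $\max_i \sigma_i^2/n_i$. We use the conventions $\sigma_i^2/n_i=+\infty$ when $\sigma_i>0$ and $n_i=0$, and $0/n_i=0$. The guiding idea is equalization: at an optimum, every anchor with $\sigma_i>0$ must attain the maximum. If some such anchor were strictly below the maximum, we could move a little mass from it to the maximizing anchors and strictly decrease the maximum.

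The argument has three steps. First, we verify that the proposed $n_i^\star=N\sigma_i^2/S$, with $S:=\sum_j\sigma_j^2>0$, is feasible and equalizing. It sums to $N$. For each $i$ with $\sigma_i>0$ it gives $\sigma_i^2/n_i^\star=S/N$. Anchors with $\sigma_i=0$ receive $n_i^\star=0$ and contribute $0$. Hence the objective value is $C\sqrt{S/N}$, which is \eqref{eq:nonuniform_error}.

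Second, we prove optimality through a one-line lower bound valid for every feasible $n$. Let $M:=\max_i\sigma_i^2/n_i$. Then $\sigma_i^2\le M n_i$ for every $i$, and this trivially holds when $\sigma_i=0$. Summing over $i$ gives $S\le MN$, so $M\ge S/N$. Taking square roots and multiplying by $C$ shows that no allocation beats $C\sqrt{S/N}$, so $n^\star$ is a minimizer. This averaging argument avoids Lagrange multipliers and KKT conditions entirely.

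Third, we address uniqueness to the extent needed to justify calling \eqref{eq:nonuniform_allocation} \emph{the} allocation. If $M=S/N$, then equality must hold in the summed inequality. This forces $\sigma_i^2=Mn_i$ for every $i$, so $n_i=N\sigma_i^2/S$ for every $i$; note this also pins $n_i=0$ whenever $\sigma_i=0$. Finally, we add a remark on rounding to integers $\tilde n_i\in\{\lfloor n_i^\star\rfloor,\lceil n_i^\star\rceil\}$ with $\sum_i\tilde n_i=N$. For each anchor with $n_i^\star\ge1$ we have $\tilde n_i\ge n_i^\star/2$, so each ratio grows by at most a factor $2$ and the error by at most $\sqrt2$. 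Anchors with small $n_i^\star$ need a little care: they should be assigned at least one sample, which costs $O(K)$ of the budget. In the regime $N\gg K$ this preserves the rate.

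The main obstacle is the degenerate bookkeeping rather than the inequality itself. We must handle zero-noise anchors, which receive zero samples and therefore have no estimate; we treat their error bound as $0$, consistent with the stated bound. We must also make the integer-rounding claim precise, which we will state only up to a constant factor under $N\ge K$-type conditions.
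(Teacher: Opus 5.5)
Your proposal is correct and follows essentially the same argument as the paper: summing the per-anchor constraints $\sigma_i^2 \le M n_i$ gives the lower bound $S/N$ (the paper writes this as $n_i \ge \sigma_i^2/\lambda^2$), and $n^\star$ equalizes the ratios and so attains it. Your uniqueness step, obtained from tightness of the summed inequality, and your rounding remark, which makes explicit that a condition like $N \gtrsim K$ is needed, go slightly beyond what the paper proves.
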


\begin{proof}
Let $\lambda$ denote the value of the maximum.  If
\begin{equation}
    \max_i \frac{\sigma_i}{\sqrt{n_i}}\le \lambda,
\end{equation}
then necessarily $n_i\ge \sigma_i^2/\lambda^2$ for every $i$.  Summing over $i$ gives
\begin{equation}
    N=\sum_i n_i
    \ge
    \frac{1}{\lambda^2}\sum_i\sigma_i^2,
\end{equation}
so every feasible allocation satisfies
\begin{equation}
    \lambda\ge \sqrt{\frac{\sum_i\sigma_i^2}{N}} .
\end{equation}
The allocation~\eqref{eq:nonuniform_allocation} attains equality for every $i$ with $\sigma_i>0$, since
\begin{equation}
    \frac{\sigma_i}{\sqrt{n_i^\star}}
    =
    \sqrt{\frac{\sum_j\sigma_j^2}{N}} .
\end{equation}
Multiplying by $C$ gives~\eqref{eq:nonuniform_error}.
\end{proof}

\paragraph{Implication.}
The optimal allocation is proportional to variance proxy $\sigma_i^2$, not to standard deviation $\sigma_i$.  Anchors with higher stochasticity require super-linear replication because the objective controls the worst anchor error.

\subsubsection{Beyond Nearest-Anchor Surrogates: Stable Interpolators}
\label{app:stable_interpolators}

The nearest-anchor analysis is conservative but does not automatically upper-bound an arbitrary neural network.  A neural interpolator can be better or worse depending on optimization, regularization, representation, and extrapolation behavior.  The following proposition states a sufficient condition under which the same type of decomposition applies.

\begin{assumption}[Predictor stability in task parameter]
\label{ass:predictor_stability}
For a realized predictor $\hat f$, there exists $\widehat L<\infty$ such that for all $z,t$ and all $p,p'\in\mathcal P$,
\begin{equation}
    \|\hat f(z,p,t)-\hat f(z,p',t)\|
    \le
    \widehat L\|p-p'\| .
\end{equation}
\end{assumption}

\begin{proposition}[Coverage bound for stable interpolators]
\label{prop:stable_interpolator_revised}
Suppose Assumptions~\ref{ass:lipschitz_p} and~\ref{ass:predictor_stability} hold.  Then for any evaluation distribution $\nu$ over $(z,t)$,
\begin{equation}
\label{eq:stable_interpolator_bound}
    \sup_{p\in\mathcal P}
    \mathbb E_{\nu,\xi}
    \bigl[\|\hat f(z,p,t)-f^\star(z,p,t)\|\bigr]
    \le
    \max_i
    \mathbb E_{\nu,\xi}
    \bigl[\|\hat f(z,p_i,t)-f^\star(z,p_i,t)\|\bigr]
    + (L+\widehat L)h_K .
\end{equation}
If $\widehat L$ is random, the same bound holds with $\mathbb E[\widehat L]h_K$ after taking expectation, provided the stability condition holds almost surely.
\end{proposition}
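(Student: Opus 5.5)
The plan is to repeat the deterministic triangle-inequality argument behind Proposition~\ref{prop:coverage_density_decomp_revised}, with Assumption~\ref{ass:nearest_anchor} replaced by the stability of $\hat f$ from Assumption~\ref{ass:predictor_stability}. We work pointwise in $(z,t)$ and for a fixed realization $\xi$, and only take expectations at the end.

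First, fix $p\in\mathcal P$ and $(z,t)$, and let $i(p)$ be a nearest-anchor index, so that $\|p-p_{i(p)}\|\le h_K$ by the definition of the fill distance. Insert the anchor values $\hat f(z,p_{i(p)},t)$ and $f^\star(z,p_{i(p)},t)$ and use the triangle inequality to split the error into three terms:
\begin{equation*}
\|\hat f(z,p,t)-f^\star(z,p,t)\| \le \|\hat f(z,p,t)-\hat f(z,p_{i(p)},t)\| + \|\hat f(z,p_{i(p)},t)-f^\star(z,p_{i(p)},t)\| + \|f^\star(z,p_{i(p)},t)-f^\star(z,p,t)\|.
\end{equation*}
The first term is at most $\widehat L h_K$ by Assumption~\ref{ass:predictor_stability}. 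The third term is at most $L h_K$ by Assumption~\ref{ass:lipschitz_p}. The middle term is the anchor estimation error at the single index $i(p)$.

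Second, take expectation over $(z,t)\sim\nu$ and $\xi$. The index $i(p)$ depends only on $p$ and the anchor locations, which are fixed and independent of $\xi$ and $(z,t)$. Hence the expected middle term is at most $\max_i \mathbb E_{\nu,\xi}\|\hat f(z,p_i,t)-f^\star(z,p_i,t)\|$. Taking the supremum over $p$ then gives~\eqref{eq:stable_interpolator_bound} when $\widehat L$ is deterministic.

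For the random-$\widehat L$ clause, keep the factor $\widehat L(\xi)$ inside the expectation before bounding it. Since the stability inequality holds almost surely, the first term integrates to at most $\mathbb E[\widehat L]\,h_K$; $h_K$ is deterministic, so it factors out. The only points needing care are that $\widehat L$ may depend on $\xi$ and that $i(p)$ must not depend on the randomness; both are handled as above. If measurability of $\widehat L$ is a concern, replace it by the measurable quantity $\sup_{p\neq p'}\|\hat f(z,p,t)-\hat f(z,p',t)\|/\|p-p'\|$, with the supremum also taken over $(z,t)$. I expect no substantive obstacle: the argument is routine once the error is split through the nearest anchor.
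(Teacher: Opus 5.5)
Your proposal is correct and follows the paper's proof essentially verbatim: the same three-term triangle inequality through the nearest anchor $p_{i(p)}$, bounding the predictor-stability and target-Lipschitz terms by $\widehat L h_K$ and $L h_K$, then taking expectations and the supremum over $p$. Your added care, noting that $i(p)$ does not depend on $\xi$ and keeping $\widehat L(\xi)$ inside the expectation in the random case, makes explicit what the paper leaves implicit. One small correction: your aside that the supremum of difference quotients is automatically measurable needs, for instance, continuity of $\hat f$, so that the supremum reduces to a countable one; this does not affect the main argument.
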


\begin{proof}
For any $p$, let $i(p)$ be a nearest anchor.  By the triangle inequality,
\begin{align}
    \|\hat f(z,p,t)-f^\star(z,p,t)\|
    &\le
    \|\hat f(z,p_{i(p)},t)-f^\star(z,p_{i(p)},t)\| \\
    &\quad+
    \|\hat f(z,p,t)-\hat f(z,p_{i(p)},t)\| \\
    &\quad+
    \|f^\star(z,p,t)-f^\star(z,p_{i(p)},t)\| .
\end{align}
The last two terms are bounded by $\widehat Lh_K$ and $Lh_K$, respectively.  Taking expectations and the supremum over $p$ proves the result.
\end{proof}

\paragraph{Consequence for neural policies.}
Equation~\eqref{eq:stable_interpolator_bound} is the appropriate formal way to relate neural interpolation to the coverage--density argument.  Neural networks may have favorable empirical interpolation behavior, but an improved coverage term requires explicit smoothness or stability control.  In particular, Lipschitz regularity of $f^\star$ alone supports an $O(h_K)$ worst-case coverage term; an $O(h_K^2)$ term requires stronger smoothness and an interpolation procedure with second-order approximation accuracy.

More generally, if an interpolator satisfies a deterministic approximation bound of order $\beta>0$,
\begin{equation}
    \sup_{p\in\mathcal P}
    \|\hat f(z,p,t)-f^\star(z,p,t)\|
    \lesssim
    \max_i\|\hat f(z,p_i,t)-f^\star(z,p_i,t)\|
    + B h_K^\beta,
\end{equation}
then the uniform-allocation trade-off becomes
\begin{equation}
    \mathcal E_\beta(K)
    \lesssim
    C\sigma\sqrt{\frac{K}{N}} + B K^{-\beta/d},
\end{equation}
which is optimized at
\begin{equation}
    K_\beta^\star
    \asymp
    \left(\frac{B^2N}{\sigma^2}\right)^{\frac{d}{d+2\beta}},
    \qquad
    \mathcal E_\beta(K_\beta^\star)
    \asymp
    \sigma^{\frac{2\beta}{d+2\beta}}
    B^{\frac{d}{d+2\beta}}
    N^{-\frac{\beta}{d+2\beta}} .
\end{equation}
For the nearest-anchor/Lipschitz case, $\beta=1$.  A second-order rate $\beta=2$ is possible only under stronger smoothness assumptions, such as bounded second derivatives in $p$, together with an interpolator that actually realizes second-order approximation.

\subsubsection{Robustness to Discontinuities}
\label{app:discontinuity_revised}

Worst-case and average-case guarantees behave differently when the task map has discontinuities.  A discontinuous region of small measure cannot improve a supremum bound: a worst-case query may still fall exactly in that region.  Therefore measure-dependent robustness statements must be formulated for distributional risk.

Let $\mu$ be a test distribution over $p\in\mathcal P$.  Suppose $\mathcal P$ decomposes into a smooth region $\mathcal P_{\rm sm}$ and an exceptional region $\mathcal P_{\rm ex}$ such that
\begin{equation}
    \mu(\mathcal P_{\rm ex})\le \epsilon .
\end{equation}
Assume $f^\star$ is $L$-Lipschitz on $\mathcal P_{\rm sm}$, the anchors cover $\mathcal P_{\rm sm}$ with fill distance $h_{\rm sm}$, and the loss is bounded on the exceptional region by $M$:
\begin{equation}
    \mathbb E_{\nu,\xi}
    \bigl[\|\hat f(z,p,t)-f^\star(z,p,t)\|\bigr]
    \le M,
    \qquad p\in\mathcal P_{\rm ex}.
\end{equation}
Then
\begin{equation}
\label{eq:discontinuity_average_bound}
    \mathbb E_{p\sim\mu,(z,t)\sim\nu,\xi}
    \bigl[\|\hat f(z,p,t)-f^\star(z,p,t)\|\bigr]
    \le
    \max_i
    \mathbb E_{\nu,\xi}
    \bigl[\|\hat f(z,p_i,t)-f^\star(z,p_i,t)\|\bigr]
    + Lh_{\rm sm}
    + \epsilon M .
\end{equation}
This bound is distributional.  Without the expectation over $p\sim\mu$, the term $\epsilon M$ cannot be used to control the supremum over all task parameters.

\paragraph{Practical implication.}
For contact-rich manipulation, discontinuities are best handled by phase segmentation, explicit mode-conditioned models, or additional anchors near suspected mode boundaries.  The theory supports these interventions because they either reduce the smooth-region fill distance or reduce the probability mass and loss magnitude associated with exceptional regions.

\subsubsection{Heavy-Tailed Noise}
\label{app:heavy_tail_revised}

The sub-Gaussian assumption is convenient but may fail in robotics because rare catastrophic executions can produce heavy-tailed errors.  The following statement gives a conservative expectation bound for the ordinary sample mean under finite-moment assumptions.

\begin{proposition}[Sample mean under finite moments]
\label{prop:heavy_tail_revised}
Let $\varepsilon_1,\ldots,\varepsilon_n$ be independent, mean-zero scalar random variables satisfying
\begin{equation}
    \mathbb E|\varepsilon_j|^q \le \sigma^q
\end{equation}
for some $q\in(1,2]$.  Then
\begin{equation}
\label{eq:finite_moment_mean}
    \mathbb E\left|\frac{1}{n}\sum_{j=1}^n\varepsilon_j\right|
    \le
    2^{1/q}\sigma n^{-(1-1/q)} .
\end{equation}
In particular, $q=2$ recovers the usual $n^{-1/2}$ expectation rate, while the rate degrades as $q\downarrow 1$.
\end{proposition}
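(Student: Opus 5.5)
Write $S_n=\sum_{j=1}^n\varepsilon_j$. The target is $\mathbb E|S_n|\le 2^{1/q}\sigma n^{1/q}$; dividing by $n$ then gives \eqref{eq:finite_moment_mean}, since $n^{1/q}/n=n^{-(1-1/q)}$. The approach is to control the $q$-th moment of $S_n$ by a von Bahr--Esseen type inequality and then pass to the first moment by Jensen (Lyapunov). Concretely, I will show
\begin{equation*}
\mathbb E|S_n|^q\le 2\sum_{j=1}^n\mathbb E|\varepsilon_j|^q\le 2n\sigma^q,
\end{equation*}
after which $\mathbb E|S_n|\le(\mathbb E|S_n|^q)^{1/q}\le 2^{1/q}\sigma n^{1/q}$, which is exactly the claimed constant. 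When $q=2$ the inequality even holds with constant $1$, because the cross terms vanish by independence and zero mean. This recovers the $n^{-1/2}$ rate.

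The key step is the von Bahr--Esseen inequality for $1\le q\le 2$: for independent mean-zero $\varepsilon_j$, one has $\mathbb E|\sum_j\varepsilon_j|^q\le 2\sum_j\mathbb E|\varepsilon_j|^q$. I would cite it, but if a self-contained argument is wanted, I would prove it by induction on $n$. Set $X=S_{n-1}$ and $Y=\varepsilon_n$, which are independent, and use the two-point inequality
\begin{equation*}
|x+y|^q\le |x|^q+q\,\mathrm{sgn}(x)|x|^{q-1}y+2|y|^q,\qquad 1<q\le 2,
\end{equation*}
valid for all real $x,y$. Taking expectations, the middle term vanishes because $Y$ has mean zero and is independent of $X$. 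This gives $\mathbb E|X+Y|^q\le\mathbb E|X|^q+2\mathbb E|Y|^q$, and iterating yields the factor $2$. The base case is trivial.

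The main obstacle is verifying the pointwise inequality with constant $2$. I would normalize by setting $x=1$ and $y=s$, which is possible by homogeneity and the symmetry $x\mapsto -x$, $y\mapsto -y$. The claim then becomes $g(s)=1+qs+2|s|^q-|1+s|^q\ge0$ for all real $s$, and I would check it by cases.
\begin{itemize}
\item For $s\ge0$, use concavity of $u\mapsto u^{q-1}$ to bound $|1+s|^q-1-qs\le q\int_0^s\bigl((1+u)^{q-1}-1\bigr)\,du\le q\int_0^s u^{q-1}\,du=s^q$. This is at most $2s^q$.
\item For $s\in[-1,0)$, the same integral argument works with $(1-u)^{q-1}\ge 1-u^{q-1}$.
\item For $s<-1$, write $|1+s|\le |s|$ and bound directly using $q|s|\le q|s|^q$ and $q\le 2$.
\end{itemize}
If the constant $2$ proves delicate in one of these regions, the fallback is a symmetrization argument. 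One replaces $\varepsilon_j$ by $\varepsilon_j-\varepsilon_j'$, with $\varepsilon_j'$ an independent copy, and then uses the Rademacher inequality together with the contraction $|a+b|^q\le 2^{q-1}(|a|^q+|b|^q)$. This gives a constant of order $2^{q}$ rather than $2$, which is still acceptable for the stated rate up to the constant.
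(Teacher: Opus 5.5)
Your main argument (Jensen/Lyapunov to pass from $\mathbb E|S_n|$ to $(\mathbb E|S_n|^q)^{1/q}$, then the von Bahr--Esseen bound $\mathbb E|S_n|^q\le 2\sum_j\mathbb E|\varepsilon_j|^q\le 2n\sigma^q$) is exactly the paper's proof. The paper likewise just cites von Bahr--Esseen, and this route yields the stated constant $2^{1/q}$.

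Your optional self-contained proof of von Bahr--Esseen goes beyond the paper. It is a one-step induction from $|x+y|^q\le|x|^q+q\,\mathrm{sgn}(x)|x|^{q-1}y+2|y|^q$, and its structure is sound. Your cases $s\ge0$ and $s\in[-1,0)$ are correct, in fact with constant $1$.

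The case $s<-1$, however, is justified incorrectly. After $|1+s|^q\le|s|^q$ you need $1-q|s|+|s|^q\ge0$. The hint $q|s|\le q|s|^q$ bounds the negative term in the wrong direction and leaves only $1-(q-1)|s|^q$, which is negative for large $|s|$. Replacing $q$ by $2$ in $-q|s|$ also fails, just above $|s|=1$ when $q<2$.

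The fix is Bernoulli's inequality. For $r=|s|\ge1$, convexity gives $r^q\ge 1+q(r-1)$, hence $1-qr+r^q\ge 2-q\ge0$. This is where $q\le2$ is actually needed.

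Also note that your symmetrization fallback only gives $\mathbb E|S_n|\le 2\sigma n^{1/q}$, i.e.\ constant $2$ rather than $2^{1/q}$. It would therefore not prove the proposition as stated, but with the corrected case it is unnecessary.
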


\begin{proof}
By Jensen's inequality,
\begin{equation}
    \mathbb E\left|\frac{1}{n}\sum_{j=1}^n\varepsilon_j\right|
    \le
    \left(\mathbb E\left|\frac{1}{n}\sum_{j=1}^n\varepsilon_j\right|^q\right)^{1/q}.
\end{equation}
For $q\in(1,2]$, the von Bahr--Esseen inequality gives
\begin{equation}
    \mathbb E\left|\sum_{j=1}^n\varepsilon_j\right|^q
    \le
    2\sum_{j=1}^n\mathbb E|\varepsilon_j|^q
    \le
    2n\sigma^q .
\end{equation}
Therefore
\begin{equation}
    \left(\mathbb E\left|\frac{1}{n}\sum_{j=1}^n\varepsilon_j\right|^q\right)^{1/q}
    \le
    2^{1/q}\sigma n^{1/q-1},
\end{equation}
which proves~\eqref{eq:finite_moment_mean}.
\end{proof}

\paragraph{High-probability robustness.}
Under heavy-tailed noise, the ordinary sample mean generally does not satisfy the sub-Gaussian high-probability bound in Lemma~\ref{lem:anchor_concentration_revised}.  If high-probability guarantees are required under only finite variance, the anchor estimator should be replaced by a robust mean estimator such as median-of-means or Catoni's estimator.  Such estimators can recover $O(\sigma\sqrt{\log(1/\delta)/n})$ deviations under finite-variance assumptions, up to universal constants, but this requires changing the estimator.

\subsubsection{Two-Stage Core--Boundary Allocation}
\label{app:two_stage_revised}

We next formalize the intuition behind a two-stage protocol.  Unlike the main decomposition, this result depends on a stylized regional model.  It should be interpreted as guidance for budget design, not as a claim that the empirical training pipeline exactly solves the corresponding optimization problem.

Suppose $\mathcal P$ is partitioned into two regions,
\begin{equation}
    \mathcal P = \mathcal P_{\rm core}\cup\mathcal P_{\rm bd},
    \qquad
    \mathcal P_{\rm core}\cap\mathcal P_{\rm bd}=\emptyset .
\end{equation}
Let $v_r$ denote the $d$-dimensional volume of region $r\in\{\mathrm{core},\mathrm{bd}\}$, and let $\sigma_r$ denote its noise scale.  Assume region-specific anchor sets satisfy
\begin{equation}
    h_r(K_r)
    \le
    c_r\left(\frac{v_r}{K_r}\right)^{1/d},
\end{equation}
and that $N_r$ samples are allocated to region $r$, uniformly over $K_r$ anchors.  The regional error model is
\begin{equation}
\label{eq:regional_error_model}
    \mathcal E_r(K_r,N_r)
    \le
    C\sigma_r\sqrt{\frac{K_r}{N_r}}
    + Lc_r\left(\frac{v_r}{K_r}\right)^{1/d} .
\end{equation}

\begin{proposition}[Regional budget split under worst-case balancing]
\label{prop:two_stage_revised}
Under the regional model~\eqref{eq:regional_error_model}, optimizing $K_r$ within each region gives
\begin{equation}
\label{eq:regional_optimized_error}
    \mathcal E_r^\star(N_r)
    \asymp
    (C\sigma_r)^{\frac{2}{d+2}}
    (Lc_r v_r^{1/d})^{\frac{d}{d+2}}
    N_r^{-\frac{1}{d+2}} .
\end{equation}
If the objective is to minimize the maximum of the two regional errors, the balanced split satisfies
\begin{equation}
\label{eq:regional_budget_split}
    \frac{N_{\rm core}}{N_{\rm bd}}
    \asymp
    \frac{\sigma_{\rm core}^2 c_{\rm core}^d v_{\rm core}}
         {\sigma_{\rm bd}^2 c_{\rm bd}^d v_{\rm bd}} .
\end{equation}
\end{proposition}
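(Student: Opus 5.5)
The plan is to reduce the regional problem to the single-region calculation of \S\ref{app:optimal_k_revised}, and then solve a scalar balancing equation. For a fixed region $r$, write the right-hand side of~\eqref{eq:regional_error_model} as
\begin{equation}
    g_r(K) := A_r\sqrt{\frac{K}{N_r}} + B_r K^{-1/d},
    \qquad A_r := C\sigma_r,\quad B_r := Lc_r v_r^{1/d}.
\end{equation}
This has exactly the form of~\eqref{eq:uniform_tradeoff}, with $(C\sigma, Lc_{\mathcal P}, N)$ replaced by $(A_r, B_r, N_r)$. We relax $K$ to a real variable, as in the paper. The function $g_r$ is strictly convex on $(0,\infty)$ and tends to $+\infty$ at both ends, so it has a unique minimizer given by the first-order condition $\tfrac{A_r}{2}K^{-1/2}N_r^{-1/2} = \tfrac{B_r}{d}K^{-1/d-1}$. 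Solving gives
\begin{equation}
    K_r^\star = \Bigl(\tfrac{2B_r}{dA_r}\Bigr)^{\frac{2d}{d+2}} N_r^{\frac{d}{d+2}} .
\end{equation}
At this point both terms of $g_r$ are of the same order, so substituting yields
\begin{equation}
    g_r(K_r^\star) \asymp A_r^{\frac{2}{d+2}}B_r^{\frac{d}{d+2}}N_r^{-\frac{1}{d+2}} ,
\end{equation}
where the hidden constant depends only on $d$. This is~\eqref{eq:regional_optimized_error}, and it agrees with~\eqref{eq:rate_uniform} applied region by region.

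Next, I would treat the worst-case split. Write $\mathcal E_r^\star(N_r) = a_r N_r^{-1/(d+2)}$ with $a_r \asymp A_r^{2/(d+2)}B_r^{d/(d+2)}$. Each $\mathcal E_r^\star$ is continuous and strictly decreasing in $N_r$. Under the constraint $N_{\rm core}+N_{\rm bd}=N$, the quantity $\max(\mathcal E_{\rm core}^\star,\mathcal E_{\rm bd}^\star)$ is minimized at the split where the two errors are equal. The argument is short: if one error were strictly larger, moving a small amount of budget toward that region would lower it while raising the other only slightly, so the maximum would decrease.

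Setting $a_{\rm core}N_{\rm core}^{-1/(d+2)} = a_{\rm bd}N_{\rm bd}^{-1/(d+2)}$ gives $N_{\rm core}/N_{\rm bd} = (a_{\rm core}/a_{\rm bd})^{d+2}$. Raising to the power $d+2$ gives
\begin{equation}
    a_r^{d+2} \asymp A_r^2 B_r^d = C^2\sigma_r^2 L^d c_r^d v_r .
\end{equation}
The factors $C^2$ and $L^d$ cancel in the ratio, which leaves exactly~\eqref{eq:regional_budget_split}. The $\asymp$ absorbs the $d$-dependent constants, which are identical for both regions and therefore cancel as well.

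The main obstacle is not the calculus but the validity of the relaxation. The minimizer $K_r^\star$ must be an admissible anchor count, with $1\le K_r^\star\le N_r$ so that each anchor gets at least one repeat, and integer rounding must not change the rate. I would handle this as in Corollary~\ref{cor:optimal}. Since $K_r^\star \propto N_r^{d/(d+2)} = o(N_r)$, the constraint $K_r^\star<N_r$ holds once $N_r$ is large enough. Rounding $K_r^\star$ to a neighbouring integer changes each term of $g_r$ by at most a constant factor. Hence the statements hold up to constants and for sufficiently large regional budgets, which is precisely what the $\asymp$ notation asserts. If either regional budget is too small for this, I would only claim the bound at the boundary value $K_r\in\{1,N_r\}$ and note that the balancing formula is then heuristic.
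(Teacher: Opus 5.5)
Your proposal is correct and follows the paper's proof. Like the paper, you reduce each region to the single-region calculation of \eqref{eq:uniform_tradeoff} with $\sigma\to\sigma_r$ and $Lc_{\mathcal P}\to Lc_r v_r^{1/d}$, then equate the two optimized errors and raise to the power $d+2$ so that $C^2$ and $L^d$ cancel; one small fix is that $g_r$ is not convex on $(0,\infty)$, since the concave $\sqrt{K}$ term dominates $g_r''$ for large $K$, but your uniqueness claim still holds because $g_r'(K)>0$ if and only if $K^{(d+2)/(2d)}>2B_r\sqrt{N_r}/(dA_r)$, so $g_r'$ changes sign exactly once (equivalently, $g_r$ is strictly convex in $\log K$).
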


\begin{proof}
For fixed $N_r$, minimizing~\eqref{eq:regional_error_model} over $K_r$ is the same calculation as in~\eqref{eq:uniform_tradeoff}, with $Lc_{\mathcal P}$ replaced by $Lc_rv_r^{1/d}$ and $\sigma$ replaced by $\sigma_r$.  This yields~\eqref{eq:regional_optimized_error}.  To minimize the maximum regional error, the two optimized errors should be balanced unless one region receives a boundary allocation.  Setting $\mathcal E_{\rm core}^\star(N_{\rm core})\asymp \mathcal E_{\rm bd}^\star(N_{\rm bd})$ and raising both sides to the power $d+2$ gives
\begin{equation}
    \frac{(C\sigma_{\rm core})^2(Lc_{\rm core}v_{\rm core}^{1/d})^d}{N_{\rm core}}
    \asymp
    \frac{(C\sigma_{\rm bd})^2(Lc_{\rm bd}v_{\rm bd}^{1/d})^d}{N_{\rm bd}} .
\end{equation}
Canceling common constants gives~\eqref{eq:regional_budget_split}.
\end{proof}

\paragraph{Interpretation.}
For a worst-case objective, the split depends on region volume, geometry, and noise scale.  Larger or geometrically harder regions require more coverage; noisier regions require more replication.  If $c_{\rm core}\approx c_{\rm bd}$, the split simplifies to
\begin{equation}
    \frac{N_{\rm core}}{N_{\rm bd}}
    \asymp
    \frac{v_{\rm core}}{v_{\rm bd}}
    \left(\frac{\sigma_{\rm core}}{\sigma_{\rm bd}}\right)^2 .
\end{equation}
Thus, a boundary region with larger noise can require a non-negligible fraction of the budget even if its volume is smaller.  If the objective is average-case performance under a task distribution, the probability masses of the two regions also enter the allocation; the worst-case split above should then be modified accordingly.

\subsubsection{Relation to Standard Nonparametric Rates}
\label{app:nonparametric_rates_revised}

The rate in~\eqref{eq:rate_uniform} is consistent with standard nonparametric regression over Lipschitz/H\"older classes.  For a $d$-dimensional Lipschitz function class, the minimax integrated squared-error rate is of order
\begin{equation}
    N^{-\frac{2}{d+2}},
\end{equation}
which corresponds to root mean-squared error of order
\begin{equation}
    N^{-\frac{1}{d+2}}.
\end{equation}
Our bound has the same exponent for the nearest-anchor Lipschitz case.  The analysis here differs from classical nonparametric regression in that the design is summarized by anchor coverage and per-anchor replication, which makes the coverage--density trade-off explicit and directly interpretable for robotic adaptation.

\subsubsection{Connection to Neural and Transformer Policies}
\label{app:neural_transformer_discussion_revised}

The formal results above do not require the deployed policy to be a nearest-neighbor rule.  They instead provide a baseline decomposition that exposes what any successful interpolating policy must manage: anchor accuracy and task-space coverage.

For neural networks, the relevant question is whether training induces a stable interpolator in the task parameter.  If the learned predictor has controlled Lipschitz constant in $p$, Proposition~\ref{prop:stable_interpolator_revised} applies.  If the task map is smoother than Lipschitz and the model realizes a higher-order approximation property, the generic $\beta$-order calculation predicts that fewer anchors may be required.  However, such improvements are conditional on smoothness, optimization, and regularization; they are not implied by neural-network parametrization alone.

For transformer-based policies, attention can sometimes be interpreted as a learned kernel smoother over context examples or demonstrations.  This interpretation is useful as intuition, but it does not by itself imply a coverage guarantee.  A transformer may reduce interpolation error when its learned similarity metric aligns with the task geometry, but it may also extrapolate poorly outside the support of the adaptation data.  Therefore, the formal theory should be read as an anchor-based statistical baseline, while the empirical results test whether the learned policy inherits the predicted qualitative behavior.

\subsubsection{Limitations of the Analysis}
\label{app:theory_limitations_revised}

The analysis is intentionally conservative.  Its guarantees are most informative when the following conditions approximately hold:
\begin{enumerate}
    \item The task parameter $p$ captures the dominant source of variation during adaptation.
    \item The target map varies smoothly in $p$ over the region being analyzed.
    \item Anchor-level estimation error can be controlled through repeated or sufficiently similar samples.
    \item The evaluation distribution over $(z,t)$ is covered by the data used to estimate anchor predictors, or a separate common-support assumption is justified.
\end{enumerate}

The guarantees may be loose or inapplicable when contact-mode discontinuities dominate, when state distributions shift substantially across task parameters, when failures induce heavy-tailed noise and the estimator is not robust, or when the neural policy has uncontrolled extrapolation behavior.  These limitations motivate the practical design choices used in the method: phase-aware data collection, denser sampling near difficult regions, robust aggregation when failures are present, and a staged expansion from high-density core tasks to boundary tasks.

\subsubsection{Summary}
\label{app:theory_summary_revised}

The theory yields four main takeaways.
\begin{enumerate}
    \item \textbf{Coverage--density decomposition.}  Under Lipschitz regularity and a nearest-anchor surrogate, the error separates into an anchor-estimation term and a coverage term $Lh_K$.
    \item \textbf{Interior optimum.}  With a fixed budget $N$, increasing the number of anchors improves coverage but worsens per-anchor estimation.  This yields an optimal scaling $K^\star\asymp N^{d/(d+2)}$ in the Lipschitz case.
    \item \textbf{Noise-aware allocation.}  When anchor noise levels differ, worst-case estimation error is minimized by allocating samples proportional to $\sigma_i^2$, not $\sigma_i$.
    \item \textbf{Conditional extensions.}  Stable neural interpolators satisfy analogous bounds with an explicit predictor-stability term.  Higher-order coverage rates require stronger smoothness and approximation assumptions; they should not be claimed solely from using neural networks or transformers.
\end{enumerate}


\end{document}